\documentclass[11pt, copyright, gdm]{google}

\usepackage[authoryear, sort&compress, round]{natbib}
\usepackage[utf8]{inputenc} 
\usepackage[T1]{fontenc}    
\usepackage{hyperref}       
\usepackage{url}            
\usepackage{booktabs}       
\usepackage{amsfonts,amsmath}       
\usepackage{nicefrac}       
\usepackage{microtype}      
\usepackage[svgnames]{xcolor}         
\usepackage{rotating}
\usepackage{booktabs,array}
\usepackage{tikz}
\usetikzlibrary{positioning,fit,calc,arrows.meta,backgrounds,shapes.multipart}
\definecolor{pandoraNavy}{RGB}{10,34,92}
\definecolor{pandoraBlue}{RGB}{77,128,220}
\definecolor{pandoraBlueFill}{RGB}{243,247,255}
\definecolor{pandoraGreen}{RGB}{46,133,59}
\definecolor{pandoraGreenFill}{RGB}{243,251,243}
\definecolor{pandoraGold}{RGB}{184,128,35}
\definecolor{pandoraGoldFill}{RGB}{252,248,239}
\definecolor{pandoraGray}{RGB}{165,169,180}
\definecolor{pandoraGrayFill}{RGB}{248,248,250}
\definecolor{pandoraText}{RGB}{40,40,40}
\uselogo{} 

\tikzset{
  >=Latex,
  pandora title/.style={
    font=\bfseries\fontsize{28}{30}\selectfont,
    text=pandoraNavy
  },
  pandora stage badge/.style={
    circle,
    fill=pandoraNavy,
    text=white,
    inner sep=0pt,
    minimum size=9mm,
    font=\bfseries\large
  },
  pandora stage title/.style={
    font=\bfseries\fontsize{20}{22}\selectfont,
    text=pandoraNavy
  },
  pandora divider/.style={
    draw=pandoraGray!75,
    dash pattern=on 1pt off 2pt,
    line width=0.6pt
  },
  pandora mainarrow/.style={
    -{Latex[length=3mm,width=2.2mm]},
    draw=pandoraNavy,
    line width=1.1pt
  },
  pandora thinarrow/.style={
    -{Latex[length=2.3mm,width=1.7mm]},
    draw=pandoraNavy,
    line width=0.9pt
  },
  pandora dashedarrow/.style={
    -{Latex[length=2.3mm,width=1.7mm]},
    draw=pandoraNavy,
    dash pattern=on 3pt off 2pt,
    line width=0.9pt
  },
  pandora prompt/.style={
    draw=pandoraNavy,
    rounded corners=2mm,
    line width=0.9pt,
    fill=white,
    align=center
  },
  pandora bluebox/.style={
    draw=pandoraBlue,
    fill=pandoraBlueFill,
    rounded corners=2mm,
    line width=0.9pt,
    align=center
  },
  pandora greenbox/.style={
    draw=pandoraGreen,
    fill=pandoraGreenFill,
    rounded corners=2mm,
    line width=0.9pt,
    align=center
  },
  pandora graybox/.style={
    draw=pandoraGray,
    fill=pandoraGrayFill,
    rounded corners=2mm,
    line width=0.8pt,
    align=center
  },
  pandora dashedbox/.style={
    draw=pandoraNavy,
    rounded corners=2mm,
    dash pattern=on 3pt off 2pt,
    line width=0.9pt,
    fill=white,
    align=center
  },
  pandora policy green/.style={
    draw=pandoraGreen,
    fill=pandoraGreenFill,
    rounded corners=2.2mm,
    line width=0.95pt,
    align=center
  },
  pandora policy gold/.style={
    draw=pandoraGold,
    fill=pandoraGoldFill,
    rounded corners=2.2mm,
    line width=0.95pt,
    align=center
  },
  pandora smallcap/.style={
    font=\fontsize{10.5}{12}\selectfont,
    text=pandoraText,
    align=center
  },
  pandora body/.style={
    font=\fontsize{11.5}{13}\selectfont,
    text=pandoraText,
    align=center
  },
  pandora policy title green/.style={
    font=\bfseries\fontsize{19}{21}\selectfont,
    text=pandoraGreen
  },
  pandora policy title gold/.style={
    font=\bfseries\fontsize{19}{21}\selectfont,
    text=pandoraGold
  },
  pandora policy subtitle green/.style={
    font=\itshape\fontsize{14}{16}\selectfont,
    text=pandoraGreen
  },
  pandora policy subtitle gold/.style={
    font=\itshape\fontsize{14}{16}\selectfont,
    text=pandoraGold
  }
}
\usepackage{float}

\usepackage{enumitem}
\usepackage[noend]{algpseudocode}
\usepackage{algorithm}
\usepackage{algorithmicx}
\usepackage{graphicx}
\definecolor{darkblue}{rgb}{0, 0, 0.5}
\hypersetup{colorlinks=true, citecolor=darkblue, linkcolor=darkblue, urlcolor=darkblue}
\usepackage{amsfonts,amsmath, amssymb, amsthm}

\newtheorem{proposition}{Proposition}
\newtheorem{corollary}{Corollary}
\newtheorem{remark}{Remark}

\usepackage{cleveref}
\usepackage{comment}
\usepackage{pgfplots}
\usepackage[parfill]{parskip}
\pgfplotsset{compat=1.18}
\usepgfplotslibrary{groupplots}
\usepackage{todonotes}
\usepackage{multirow}
\newcommand{\E}{\mathbb{E}}

\DeclareMathOperator*{\argmax}{arg\,max}
\newcommand{\calX}{\mathcal{X}}
\newcommand{\calY}{\mathcal{Y}}

\renewcommand{\paragraph}{\textbf}

\makeatletter
\newcommand{\multiline}[1]{%
  \parbox[t]{\dimexpr\linewidth-\ALG@thistlm}{#1\strut}%
}
\makeatother
\usepackage{etoc}
\usepackage[most,skins,theorems]{tcolorbox}

\tcbset{
  aibox/.style={
    width=\linewidth,
    top=6pt,
    bottom=0pt,
    colback=blue!6!white,
    colframe=black,
    colbacktitle=black,
    enhanced,
    center,
    attach boxed title to top left={yshift=-0.1in,xshift=0.15in},
    boxed title style={boxrule=0pt,colframe=white,},
  }
}
\newtcolorbox{AIbox}[2][]{aibox,title=#2,#1}

\tcbset{
    graybox/.style={
        width=\linewidth,
        top=6pt,
        bottom=0pt,
        colback=gray!20,
        colframe=black,
        colbacktitle=black,
        enhanced,
        center,
        attach boxed title to top left={yshift=-0.1in,xshift=0.15in},
        boxed title style={boxrule=0pt,colframe=white,},    
    }
}
\newtcolorbox{Graybox}[2][]{graybox,title=#2,#1}

\title{Pandora's AI Model Routing Box: Efficient Allocation with Costly Value Estimation}

\correspondingauthor{Correspondence to \texttt{\{fisch, jeisenstein\}@google.com.}}

\author{%
  Adam Fisch$^{*}$, Shubhendu Trivedi, Fantine Huot, William W. Cohen, Michael Kaisers, Mirella Lapata, Kate Larson, Jacob Eisenstein$^{*}$ \\
  Google DeepMind
}

\begin{document}

\begin{abstract}

Heterogeneous AI systems composed of multiple models, architectures, harnesses, or inference-time settings can improve quality and efficiency by routing queries to the \emph{specialist} who can answer most effectively at the lowest cost. Routing  requires estimating each specialist's expected return, but this value estimation has a cost. Cheap estimators (e.g., embedding-based predictors) are fast but noisy, while accurate estimators (e.g., fine-tuned models with access to retrieval results or partial reasoning traces) are expensive. We formalize this tradeoff as an instance of Pandora's Box, the classical problem of optimal search with costly inspection. Under a Gaussian signal model, the resulting policies have closed-form value-of-information expressions that determine, for each specialist and input, whether refining the value estimate is worth its cost. We call the centralized policy Pandora's Router.  We extend this to a decentralized setting, Pandora's Bidder, where specialists independently decide whether to invest in self-assessment before accepting an offered price to claim a query. Experiments across three domains---a standard multi-LLM benchmark, retrieval-augmented specialists, and LLMs with variable inference-time reasoning---show that Pandora's Router matches the routing quality of exhaustive estimation, while querying the expensive estimator far less often. In the decentralized setting, value-of-information reasoning improves allocative efficiency when competing estimates are accurate; when competing estimates are noisy, however, it can increase the strategic specialist's utility at the expense of others.\looseness=-1
\end{abstract}

\maketitle

\section{Introduction}
\label{sec:intro}
AI model providers now often offer heterogeneous model families spanning   a wide range of costs and capabilities: small and fast models for simple prompts, large and expensive ones for complex tasks, and augmented variants with  tools, retrieval, or extended reasoning for more specialized tasks. This raises the question of how to allocate each input to the model best suited for it, for a given level of cost sensitivity. This is the \emph{routing} problem, and a growing body of work addresses it by estimating each model's expected return on the input and selecting the maximizer~\citep{hu2024routerbench, shnitzer2023large, feng2026moco}. But \emph{value estimation} is neither free nor particularly easy. An embedding-based predictor is cheap but noisy; a fine-tuned scoring model is more accurate but more expensive; and computing partial solutions, executing tool calls, or running retrieval pipelines can be more expensive still. The routing decision itself thus involves a cost-accuracy tradeoff, which is typically ignored. We explore this tradeoff, asking: when is it worth paying for a better value estimate?\looseness=-1

The question has a clean analogy to the \emph{Pandora's Box} problem from search theory~\citep{weitzman1979optimal}. Pandora is presented with $M$ boxes, each with a hidden value which is known to her only in distribution. For a cost $c_m$ she can open any box $m$ and observe its value. At any point, she can stop searching and claim the best value seen thus far. Her goal is to maximize the difference between the value obtained minus the costs paid. \citet{weitzman1979optimal} showed that the optimal policy has a remarkably simple structure, based on computing, for each box, its \emph{reservation price}---the outside-option value at which the expected benefit of opening the box balances out the cost of opening it. Once these prices are calculated, Pandora opens the boxes in descending order of their reservation prices, stopping when she has found a value that exceeds the maximum of the reservation prices of the remaining unopened boxes.\looseness=-1

In this paper,  we cast routing with costly value estimation as an instance of Pandora's Box. Here, each specialist can be viewed as a box. The router always has access to a cheap value estimate $f_m(x)$, and can optionally pay $c_m$ to query a more accurate but costly estimate $g_m(x)$; i.e., pay to "open the box", and gain more information about the quality of specialist $m$. The question of which estimates to compute, and when to stop computing and commit to a specialist, is the same as Pandora's problem. Under a Gaussian signal model for the relationship between $f$ and $g$, the reservation prices and the associated value-of-information expressions have closed forms. To allow Pandora to select a box without making \emph{any} queries to $g$, we consider the \emph{non-obligatory} variant of the Pandora's Box problem, in which she can select a box that she has not opened~\citep{doval2018whether,beyhaghi2019pandora}.\looseness=-1

 Next, we consider the generalization from routing to decentralized forms of prompt allocation. In a routing system, a single centralized router controls all value estimation. In practice, however, the specialists themselves may be better positioned to estimate their own value. For example, a retrieval-augmented specialist has access to its own corpus, and can measure how relevant it is to the query; a math specialist can start to reason about or plan what computations will be required; a domain expert can know its own performance on internal benchmarks. None of these resources need be available to the router, and in some scenarios, specialists might even want to conceal information such as whether there are matches to a query in a private retrieval corpus. With this in mind, we extend the concept of value estimation for routing, to value estimation for \emph{bidding}, where decentralized agents learn how to leverage costly value estimation when claiming queries, in a way that maximizes their profit. Concretely, we propose \emph{Pandora's Bidder}: each specialist faces a posted price---the current best competing offer---and must decide whether to invest in a more accurate self-assessment before claiming the query. This corresponds to a single stage of the ascending-price mechanism with costly preference elicitation studied by \citet{parkes2005auction}; analyzing this simple setting isolates core components of decentralized prompt allocation under private value estimation with profit incentives.

\begin{AIbox}{Core contributions of this work}
As AI systems with diverse capabilities proliferate, reasoning about the economics of model selection becomes increasingly relevant. To our knowledge, this is the first paper to explicitly connect model routing with the classical Pandora's Box problem from economics and operations research. Specifically, this paper makes three main contributions:\looseness=-1
\begin{itemize}[leftmargin=*,itemsep=5pt]
\vspace{5pt}
    \item We formalize model routing with costly value estimation as an instance of Pandora's Box with non-obligatory inspection~\citep{doval2018whether}, yielding a reservation-price-based policy. Under a Gaussian signal model, this policy has a closed form that is straightforward to compute (\S\ref{sec:pandora-router}).\looseness=-1
    \item We extend to a decentralized setting where specialists use value-of-information reasoning to decide whether to refine their self-assessments before accepting a posted price (\S\ref{sec:pandora-bidder}).\looseness=-1
       \item We evaluate both frameworks  (see \Cref{fig:pandora-unified}) across three domains: a  multi-LLM routing benchmark (EmbedLLM), retrieval-augmented specialists for factoid QA, and inference-time computation for mathematical reasoning. In each setting, we find that both Pandora's Router and Pandora's Bidder are able to gracefully trade-off expensive and cheap estimates to achieve strong allocative performance relative to baselines across a range of estimation costs.\looseness=-1 
\end{itemize}
\end{AIbox}

\begin{figure*}[t]
\label{fig:pandora_high_level}
\centering
\resizebox{\textwidth}{!}{%
\begin{tikzpicture}[x=1cm,y=1cm]

\node[pandora stage badge, fill=white, draw=gray!50, text=gray!70!black, font=\small\mdseries, line width=0.6pt, inner sep=2pt] at (0.2,6.4) {1};
\node[pandora stage title, anchor=west, font=\normalsize\bfseries\color{gray!70!black}] at (1.1,6.4) {Prompt $x$};

\node[pandora prompt, minimum width=3.0cm, minimum height=1.8cm, text width=2.8cm, align=center] (promptbox) at (1.8,3.6) {Find the remainder\\when $7^{103}$ is\\divided by $13$.};


\draw[pandora mainarrow] (3.3,3.6) -- (4.4,3.6);

\draw[pandora divider] (3.8,-2.0) -- (3.8,6.8);

\node[pandora stage badge, fill=white, draw=gray!50, text=gray!70!black, font=\small\mdseries, line width=0.6pt, inner sep=2pt] at (4.3,6.4) {2};
\node[pandora stage title, anchor=west, font=\normalsize\bfseries\color{gray!70!black}] at (5.2,6.4) {Cheap screening};

\draw[draw=pandoraNavy,line width=1.1pt] (4.4,1.4) -- (4.4,5.0);
\draw[pandora thinarrow] (4.4,5.0) -- (5.2,5.0);
\draw[pandora thinarrow] (4.4,3.6) -- (5.2,3.6);
\draw[pandora thinarrow] (4.4,1.4) -- (5.2,1.4);

\node[pandora bluebox, minimum width=3.8cm, minimum height=1.0cm, align=center] (m1) at (7.2,5.0) {Specialist $1$\\[0.5mm]\scriptsize $f_1(x)$: modest prior};
\node[pandora bluebox, minimum width=3.8cm, minimum height=1.0cm, align=center] (m2) at (7.2,3.6) {Specialist $2$\\[0.5mm]\scriptsize $f_2(x)$: promising prior};
\node[font=\fontsize{24}{24}\selectfont,text=pandoraNavy] at (7.2,2.5) {$\vdots$};
\node[pandora bluebox, minimum width=3.8cm, minimum height=1.0cm, align=center] (mM) at (7.2,1.4) {Specialist $M$\\[0.5mm]\scriptsize $f_M(x)$: modest prior};

\node[pandora smallcap, text width=5.2cm, align=center] at (6.8,-0.8) {All specialists get an initial cheap, noisy value estimate $f_m(x)$.};

\draw[pandora mainarrow] (9.2,3.6) -- (10.5,3.6);

\draw[pandora divider] (9.6,-2.0) -- (9.6,6.8);

\node[pandora stage badge, fill=white, draw=gray!50, text=gray!70!black, font=\small\mdseries, line width=0.6pt, inner sep=2pt] at (10.1,6.4) {3};
\node[pandora stage title, anchor=west, font=\normalsize\bfseries\color{gray!70!black}] at (11.0,6.4) {Reservation prices};

\node[pandora dashedbox, minimum width=6.8cm, minimum height=1.4cm, align=left, text width=6.8cm] (rule) at (14.6,3.5) {%
Solve for the \textbf{reservation price} $u_m^\mathrm{rsv}$,\\ $\displaystyle \mathbb{E}_{g|f}\!\left[(g_m(x)-u_m^\mathrm{rsv})^+\right] = c_m$,
where
\begin{itemize}
\item $g_m(x)$ is a more accurate but costly value estimate;
\item $c_m$ is the cost of querying $g_m$.
\end{itemize}
};


\node[pandora smallcap, text width=8.0cm, align=center] at (14.6,-0.8) {Pandora estimates the value of information for running each expensive value estimator $g_m$.};

\draw[pandora mainarrow] (18.3,3.6) -- (19.4,3.6);

\draw[pandora divider] (18.8,-2.0) -- (18.8,6.8);

\node[pandora stage badge, fill=white, draw=gray!50, text=gray!70!black, font=\small\mdseries, line width=0.6pt, inner sep=2pt] at (19.3,6.4) {4};
\node[pandora stage title, anchor=west, font=\normalsize\bfseries\color{gray!70!black}] at (20.0,6.4) {Allocation};

\draw[draw=pandoraNavy,line width=1.1pt] (19.4,1.6) -- (19.4,4.8);
\draw[pandora thinarrow] (19.4,4.8) -- (20.0,4.8);
\draw[pandora thinarrow] (19.4,1.6) -- (20.0,1.6);

\node[pandora policy green, minimum width=5.5cm, minimum height=2.6cm] (router) at (22.8,4.4) {};
\node[font=\small\bfseries\color{pandoraGreen}] at (22.8,5.3) {Pandora's Router};
\node[font=\scriptsize\itshape\color{pandoraGreen!80!black}] at (22.8,4.95) {Centralized policy};
\draw[draw=pandoraGreen,line width=0.6pt] (20.3,4.75) -- (25.3,4.75);
\node[font=\scriptsize, text width=5.2cm, align=left] at (22.8,4) {Inspect specialists $g_m(x)$ in reservation-price order, stopping when no remaining query is worth its cost. Choose the maximizer of the observed $g_m$.};

\node[pandora policy gold, minimum width=5.5cm, minimum height=2.3cm] (bidder) at (22.8,1.6) {};
\node[font=\small\bfseries\color{pandoraGold}] at (22.8,2.5) {Pandora's Bidder};
\node[font=\scriptsize\itshape\color{pandoraGold!80!black}] at (22.8,2.15) {Decentralized policy};
\draw[draw=pandoraGold,line width=0.6pt] (20.3,1.95) -- (25.3,1.95);
\node[font=\scriptsize, text width=5.2cm, align=left] at (22.8,1.2) { The high bidder is selected in an auction. 
Each specialist refines its own value estimate only when the market price makes inspection worthwhile.};

\node[pandora smallcap, text width=6.2cm, align=center] at (22.8,-0.8) {The same  principle yields both a centralized and decentralized policy.};

\end{tikzpicture}%
}
\vspace{-10pt}
\caption{\textbf{Overview of the \textsc{Pandora} framework.} The system balances cheap screening and costly reasoning for routing. \textbf{(1)} A concrete prompt $x$ is received. \textbf{(2)} Multiple specialists $m \in \{1, \ldots, M\}$ generate cheap, noisy value estimates $f_m(x)$. \textbf{(3)} The system computes \textbf{reservation prices} that quantify the expected value-of-information for querying a more costly and accurate value estimate $g_m(x)$, e.g., by running the specialist for a few reasoning tokens and checking if it is on the right track. \textbf{(4)} This unified principle supports allocation by centralized routing and decentralized auctions.}
\label{fig:pandora-unified}
\vspace{-5pt}
\end{figure*}
\section{Setting}
We begin with a more precise introduction to the problem of efficient allocation with costly value estimation. Suppose that a router must assign each prompt to the best specialist, but estimating specialist quality is itself costly. We assume that each specialist $m$ has a cheap value estimate $f_m(x)$ by default, while a more accurate but expensive estimate $g_m(x)$ can be queried selectively; we make these estimators concrete in \Cref{sec:experimental-domains}. The goal is to maximize the expected reward of the selected specialist minus total estimation cost by deciding which expensive estimates $g_m(x)$ are worth computing.\looseness=-1

Formally, let $X \in \calX$ be a random input prompt, and let $x$ denote a realization. 
Conditional on the input $X = x$, specialist $m$ samples an output $Y_m \sim P_m(\cdot \mid x)$, $Y_m \in \calY$, and receives a cost-adjusted reward $R_m(x, Y_m)$. For notational convenience, we will write the cost-adjusted reward $R_m(x, Y_m)$ simply as $R_m$. The reward may combine, for example, the graded quality of the output with the cost of producing it (which could be the inference cost, or API fees for tools called), and is a random variable through both the randomness of $Y_m$ and any noise in the evaluation itself (e.g., from a human annotator). The input-specific oracle \emph{routing} problem is to select the specialist that maximizes the conditional expected reward, that is, $\hat{m}(x) \in \argmax_m \rho_m(x),$ with $\rho_m(x) := \E[R_m \mid X=x].$ However, since the true reward is typically not available at routing time, the router must instead apply a \emph{value estimator} to predict it, and then pick the best specialist based on their estimated values. \looseness=-1

Of course, there are many ways to do value estimation, and these estimators may themselves incur widely varying computational costs. At one extreme, a cheap value estimate could just be a constant, such as the average cost-adjusted reward on a calibration set (and not conditional on $x$). At the other extreme, we might actually materialize the candidate output $Y_m = y$ for each model $m$, and then score $y$ with an estimate of $R_m$ (with an LLM-based auto-rater for example, if $R_m$ involves a human judgment), or even observe $R_m$ directly (if the reward is a known function of $y$). In practice, a straightforward approach to value estimation  is to apply a trained model to the text of the prompt, or its embedding~\cite[][\emph{inter alia}]{hu2024routerbench, lu-etal-2024-routing, ong2025routellm}. These estimators can move along a cost-accuracy curve depending on their complexity, or depending on any other  auxiliary input information that is gathered by the estimator (e.g., by invoking tools or computing partial solutions, such as plans~\citep{yao2022react,liang2025plantain}). We denote the additional information revealed from such a costly inspection of specialist $m$ by $Z_m$. Note that the costly information $Z_m$
need not be private to the specialist---it may simply reflect additional computation applied to the prompt, such as from using a more expressive encoder.\looseness=-1

Under this view, the routing process separates naturally into two phases: gathering information and making a final choice. We consider a minimal setting in which there are just two available value estimators: $f_m$ (cheap) and $g_m$ (costly), with costs $c_f < c_g$, respectively. For simplicity, we set $c_f = 0$, so that $f_m$ is always computed. We will use uppercase letters for the corresponding random variables. Let $F_m = f_m(X)$ be the cheap estimate for the specialist $m,$ and let $\mathbf F=(F_1, \ldots, F_M).$ 
Inspecting specialist $m$ reveals the costly information $Z_m$ needed to compute the refined estimate $G_m = g_m(X, Z_m)$; a process we simply call querying $g_m$. We denote the cost of this process as $c_g$.
For the additional cost of $G_m$ to be justified, it should be more accurate, in the sense that on  calibration data we have $\E[(R_m - G_m)^2] < \E[(R_m - F_m)^2].$ 
The router must dynamically decide which specialists to inspect (by querying $g_m$) in order to maximize the expected reward of the final assignment, minus the total cost of the realized $G_m$ estimates.\looseness=-1

To make things more concrete before presenting our efficient routing algorithms, we now describe three real-world domains where the two-tier estimation between $f_m$ versus $g_m$ arises organically, each with a qualitatively different source of costly information that is useful for deciding the best allocation.\looseness=-1

\subsection{Experimental domains and value estimators}
\label{sec:experimental-domains}

We explore several empirical settings, each featuring different types of specialists and value estimators. In all cases, the cheap estimator $f$ embeds the prompt with a pretrained encoder, retrieves the $k$ nearest prompts from a calibration set (measured by cosine similarity in embedding space), and returns the average reward of the retrieved neighbors as the value estimate.  The costly estimator is then constructed by fine-tuning a small language model encoder to predict the reward from a tailored, domain-specific input context. We describe these below. Note that in each domain, we partition the data into three splits: training (for fitting value estimators), calibration (for estimating the value-of-information model parameters in Sections~\ref{sec:pandora-router} and~\ref{sec:pandora-bidder}), and test. \Cref{tab:value-estimators} reports the MSE of each estimator on held-out calibration data. Additional dataset and implementation details are included in Appendix~\ref{app:dataset-details}.\looseness=-1

\paragraph{MATH: Inference-time scaling in mathematical reasoning tasks.}
In inference-time scaling, extended reasoning improves accuracy on hard problems but increases latency and cost. The costly estimator $g$ treats the early tokens of a reasoning trace as private information: it receives the prompt along with the first $t$ tokens of a model's chain-of-thought, effectively peeking at a partial solution to judge whether the model's reasoning looks promising before committing to it. This  is passed with the prompt to a small language model (Gemini 2.5 Flash-Lite) with a regression loss to predict the specialist's reward directly from the prompt text. 
The router chooses between Gemma3-4B~\citep{gemmateam2025gemma3technicalreport} (low-cost) and Gemini-3.1-Flash-Lite~\citep{gemini-3p1} (cost $0.66$, so that $R_m = \mathbf{1}\{Y_m \text{ is correct}\} - 0.66$). This high cost is chosen so that it is beneficial to route to Gemini only when confident that it can answer correctly and Gemma cannot. 
We evaluate on a corpus of 16{,}512 mathematical problems spanning MATH~\citep{data_HendrycksMATH}, Omni-Math~\citep{data_OmniMath}, AIME~\citep{data_aime_competition, data_olympiadbench}, and HMMT~\citep{data_hmmt_competition}.\looseness=-1

\paragraph{RAG: Retrieval-augmented generation with specialized corpora.}
In retrieval-augmented generation~\citep[RAG;][]{lewis2020retrieval}, retrieval results can improve answer quality but incur costs from retrieval computation, longer contexts, and potentially licensing fees for specialized corpora. The costly estimator $g$ runs retrieval and passes the results alongside the prompt to a language model fine-tuned with a regression loss, as in Math. Notably, the retrieval results themselves do carry some signal: if the retrieved documents are relevant to the query, the specialist is more likely to answer correctly, whereas irrelevant retrievals may hurt.
The router selects between three specialists: a low-cost model with no retrieval (the costly estimator $g$ here is just the more expensive LLM-based estimator), a Wikipedia RAG model, and a PubMed RAG model, where the RAG models each incur a cost of $0.05$ (again, subtracted from correctness in $R_m$).  
Prompts are a mixture of general-knowledge factoid questions and biomedical questions, following the same experimental setup as  \citet{eisenstein2025dont}.\looseness=-1

\paragraph{EmbedLLM: Large-scale language model selection.}
Finally, we consider EmbedLLM~\citep{zhuang2025embedllm}, a standard routing benchmark without auxiliary information, where the value of costly estimation comes purely from spending more compute to better distinguish among a large pool of candidates. 
As above, $g$ is a fine-tuned language model; even without private information, this is more expensive at inference time than the KNN baseline, but is significantly more accurate than the KNN baseline, since the model can learn prompt-level features that the embedding space misses.
EmbedLLM incorporates more than 100 open-weights models as routing targets; following \citet{jitkrittum2025universal}, we assign each model a cost proportional to its parameter count. Queries are drawn from standard benchmarks like MMLU~\citep{hendrycks2020measuring} and GSM8K~\citep{cobbe2021training}.

\begin{table}[t]
\small
    \centering
    \begin{tabular}{llllp{8.7cm}}
    \toprule
    Setting & Estimator & MSE & ${c_g}/{c_f}$ & Description\\
    \midrule
    MATH & $f$: KNN & $.154$ &\multirow{2}{*}{$5.8$} & $k$-nearest neighbors on prompt embeddings \\
    & $g$: SFT-CoT-$20$ & $.096$ & & Fine-tuned transformer on prompt + first $20$ tokens of the model's chain-of-thought (CoT) reasoning trace\\ \addlinespace[6pt]
    RAG & $f$: KNN & $.175$ & \multirow{2}{*}{$>7000$} & $k$-nearest neighbors on prompt embeddings \\
    & $g$: SFT-retrievals & $.109$& & Fine-tuned transformer on prompt + retrieval results\\[6pt]
    EmbedLLM & $f$: KNN & $.266$ & \multirow{2}{*}{$1.6$} & $k$-nearest neighbors on prompt embeddings \\
    & $g$: SFT-prompt & $.198$ && Fine-tuned transformer on prompt\\
    \bottomrule        
    \end{tabular}
        \caption{Value estimators across experimental domains. The cheap estimator $f$ is always computed; the costly estimator $g$ is queried selectively. The mean-squared error (MSE) is measured on a held-out calibration set. The cost ratios ${c_g}/{c_f}$ are discussed in \Cref{sec:setup-value-costs} and derived in \Cref{sec:supp-monetary-cost-derivations}.}
    \label{tab:value-estimators}
\end{table}

While the MATH and RAG settings have only two and three target models to choose from respectively, both routing and efficient value estimation are still challenging and important problems even when the number of target models is small. The strongest frontier models can be up to five times as expensive as cheaper models from the same provider,\footnote{Compare, e.g. Fable 5 vs Sonnet 5 at \url{https://platform.claude.com/docs/en/about-claude/pricing} (retrieved August 13, 2026).} so there are strong financial incentives to reserve the most expensive models for situations where they yield meaningful improvements. Similarly, as shown in \Cref{tab:value-estimators}, more computationally-intensive value estimation strategies can yield significant improvements, but at high cost. The difficulty of the information acquisition problem depends not primarily on the number of targets, but on how discriminable the options are: we can have many options, but this doesn’t matter if there is always a clear best box to use based on cheap signals alone; or we can have a few boxes that are hard to choose between, and it is expensive to know more. We picked experimental settings that try to cover key aspects of each of these considerations.

\subsection{Value estimator costs}
\label{sec:setup-value-costs}
To estimate monetary costs for the value estimators in \Cref{tab:value-estimators}, we use prices from \url{https://ai.google.dev/gemini-api/docs/pricing} (retrieved August 1, 2026), because it offers a single source of prices for LLM inference, embedding, and retrieval. The resulting estimates are meant only to show that approaches to value estimation can incur vastly different costs; we do not claim that these specific prices are optimal or even typical. Because the cost-accuracy tradeoff is a \emph{user} characteristic rather than an objectively-measurable property of the domain or method, we focus on the \emph{ratio} between the prices of $f$ and $g$. Please see \Cref{sec:supp-monetary-cost-derivations} for the derivation of these ratios.\looseness=-1
\section{Preliminaries: The Pandora's Box Problem}
\label{sec:pandora}
We begin by briefly reviewing the Pandora's Box problem~\citep{weitzman1979optimal}, which we will connect to model routing and model bidding in the following sections. In the Pandora's Box problem, a decision-maker (i.e., Pandora) is presented with $M$ options, called ``boxes'', each with an unknown reward (but with known distribution). For a price she can inspect any box to observe its reward, and at any time she can terminate the search and take the best reward that she has observed. The optimal policy depends on a mathematical  object called the \emph{reservation price}, which is a single scalar that encodes the value of inspecting a box. The reservation prices alone can be used to construct a simple, but optimal, priority-based search. Notably, this optimal search must be sequential, as the information acquired so far will help determine whether the remaining inspections still justify their cost.

To  gain intuition for the reservation price, suppose Pandora has already found a box with value $v$ for the input $x$, and must decide whether to open box $m$, whose hidden value is yet unknown, but known to be drawn from a distribution $P_m(\cdot \mid x)$. If Pandora opens the box and finds $V_m > v$, she takes $V_m$; otherwise, if $V_m < v$, she keeps $v$. Given an inspection cost of $c_m$, her expected payoff at this step from opening the box is therefore $\E[\max\{v, V_m\}] - c_m$; the payoff for \emph{not opening} $m$ is $v$. The net value is $\E[(V_m - v)^+] - c_m$, which is positive when the expected upside exceeds the cost. The \emph{reservation price} $u^{\text{rsv}}_m$, defined by \citet{weitzman1979optimal}, is the outside-option value at which this net value is exactly zero:\looseness=-1
\begin{equation}
\label{eq:reservation-price}
  \E[(V_m - u^{\text{rsv}}_m)^+] = c_m.
\end{equation}
When the current best value $v$ exceeds $u^{\text{rsv}}_m$, the cost of opening box $m$ outweighs the expected gain. The resulting decision rule is therefore quite simple: we  open box $m$ if $v < u^{\text{rsv}}_m$, and skip it otherwise.
The full selection strategy, however,  requires more than a single ``open-or-skip'' decision. Pandora must sequence her inspections, since each opened box updates the current best value $v$, which in turn changes. She may also skip inspection entirely: if the prior $P_m$ strongly favors box $m$, she can save $c_m$ and commit sight-unseen. These choices define the two classical variants of the problem.\looseness=-1

\paragraph{Pandora-OI (obligatory inspection).}
\label{sec:pandora-oi}  In the obligatory-inspection variant, Pandora must open every box she eventually selects. This is the classical Pandora's Box setting, and admits a simple solution. Specifically, \citet{weitzman1979optimal}  showed that reservation prices alone determine the optimal policy: initialize a best-seen value $v^* := -\infty$, open boxes in descending order of $u^{\text{rsv}}_m$, and stop as soon as $v^*$ exceeds the highest remaining reservation price. The box that yielded $v^*$ is selected. The remaining boxes can be safely ignored because $v^* > u^{\text{rsv}}_m$ implies $\E[(V_m - v^*)^+] < c_m$, by the definition of $u^{\text{rsv}}_m$.\looseness=-1

\paragraph{Pandora-NI (non-obligatory inspection).}
\label{sec:pandora-ni}
When inspection is non-obligatory, Pandora may select any unopened box, committing to it without paying the inspection cost, but accepting the risk that its realized value may disappoint~\citep{doval2018whether}. The optimal adaptive policy for this variant is NP-hard~\citep{fu2023pandora}, so we use a tractable restriction to \emph{committing policies}~\citep{beyhaghi2019pandora}. A committing policy designates a single box $m$ as ``held out'': this box will not be opened, but may be selected as the final choice. The remaining $M - 1$ boxes are searched via Pandora-OI, with the held-out box providing the initial outside option.\footnote{The value of the best committing policy that holds out multiple boxes cannot exceed that of the best single-box policy~\citep{beyhaghi2019pandora}, so it suffices to evaluate all $M$ single-holdout policies, in addition to the obligatory inspection policy (no holdouts). These M + 1 policies are compared by simulating their performance via MC sampling.}\looseness=-1

To use an unopened box as an outside option, we need to assign it a deterministic value. We use the \emph{backup price} $u^{\text{backup}}_m$, which like the reservation price in \Cref{eq:reservation-price}, is defined as the solution to:\looseness=-1
\begin{equation}
\label{eq:backup-price}
\E[(u^{\text{backup}}_m - V_m)^+] = c_m.
\end{equation}
Note that the backup price is related to, but different from, the reservation price. The reservation price is the outside option that makes Pandora indifferent to inspecting a box or skipping it; the backup price is the outside option that makes Pandora indifferent to committing to the sealed box sight-unseen versus paying to inspect it. Any opened box must exceed this threshold to justify discarding the sealed option. The relationship between  the backup price and the reservation price depends on the expected shortfall below the mean of the held-out box, specifically, $u^{\text{backup}}_m \le u^{\text{rsv}}_m$ if and only if $c_m \le \E[(\E[V_m] - V_m)^+]$. For   costs exceeding this threshold (very expensive boxes), the inequality reverses.\looseness=-1

The algorithm  proceeds as follows. For each candidate holdout $m \in \{0, 1, \ldots, M\}$, define the expected payoff of the committing policy that reserves box $m$:
\begin{equation}
\label{eq:committing-payoff}
\nu_m = \E\!\left[V_{\hat{m}} - \sum_{j \in O_m} c_j \right],
\end{equation}
where $V_{\hat{m}}$ is the value of the (random) box that the committing policy selects, and $O_m$ is the (random) set of boxes opened by Pandora-OI on $\{1,\ldots,M\}\setminus\{m\}$ with initial outside option $v^* = u^{\text{backup}}_m$. The case $m = 0$ corresponds to running Pandora-OI on all $M$ boxes with no holdout ($u^{\text{backup}}_0 = -\infty$). Because the held-out box is unobserved, we estimate $\nu_m$ via Monte Carlo (MC) samples $\tilde{V}_m \sim P_m(\cdot \mid x)$; we denote the MC estimate $\hat{\nu}_m$. We select  $m^*$ that maximizes $\hat{\nu}_m$, using $S=100$ samples.\footnote{Pilot experiments showed similar results with $S=30$, with performance degrading significantly only at $S=10$. We chose $S=100$ because the overall computational costs of this operation are cheap relative to value estimation itself.} At test time, we run Pandora-OI on $\{1, \ldots, M\} \setminus \{m^*\}$ with initial value $v^* = u^{\text{backup}}_{m^*}$, opening the actual boxes. We default to $m^*$ if we open no boxes or if no realized value clears $u_{m^*}^\text{backup}$; otherwise, we select the best opened box.\footnote{As a small technical detail, \citet{beyhaghi2019pandora} use the \emph{expected value} of the heldout box as the initial value $v^*$, rather than our choice of the backup price, which, empirically, we found to perform slightly better. See \Cref{sec:supplemental-algorithm}.}\looseness=-1

\section{Pandora's Router}
\label{sec:pandora-router}
A value-based router tries to select the specialist with the highest expected return. We connect routing to Pandora's Box by treating each specialist as a box and the costly value estimate as the value revealed by opening that box. One subtlety is that the box value is not the realized downstream reward $R_m$ itself.
The router never observes $R_m$ before choosing a specialist. The relevant decision value is thus the expected reward conditional on the information the router can acquire. In practice, our learned costly estimate $G_m$ is a learned plug-in approximation to this posterior decision value, and our proxy objective is to find the specialist for which $G_m$ is largest. In \Cref{prop:opened-value-reduction} and \Cref{cor:sealed-value} (Appendix~\ref{app:pandoras_objective}), we show that this is equivalent in expectation to optimizing $R_m$, assuming conditional independence.\looseness=-1

\paragraph{Gaussian signal model.}
The Pandora's Box algorithm requires an estimate of the distribution of the box values. We model the distribution of $G_m$ conditional on the cheap estimates as
\begin{equation}
\label{eq:gaussian_approx}
G_m \mid \mathbf{F} = \mathbf{f}\;\sim\; \mathcal{N}\!\left(\mu_m,\; \sigma_m^2\right), \qquad \mu_m = h_m(\mathbf{f})
\end{equation}
where $\mathbf{f}$ collects the realized cheap estimates $(f_1(x), \ldots, f_M(x))$ for all specialists and $\mu_m$ is a function $h_m(\cdot)$ of this vector, where $h_m$ and  $\sigma_m^2$ are estimated on calibration data. The choice of  $h_m$ is flexible; we explore both gradient boosted decision trees and linear regression depending on the domain (see \Cref{sec:implementation_details}). $\mu_m$ captures the predictable component of $g_m$ given $\mathbf{f}$, and $\sigma_m$ captures the residual uncertainty, which is approximated as normally distributed per \Cref{eq:gaussian_approx}.
Under this model, the value of opening box $m$ against an outside option $v$ is the expectation of a Gaussian censored at $v$:\looseness=-1
\begin{equation}
\label{eq:gaussian-voi}
\E[(G_m - v)^+] = (\mu_m - v)\,\Phi(\alpha_m) + \sigma_m\,\phi(\alpha_m),
\end{equation}
with $\alpha_m = (\mu_m - v)/\sigma_m$, and $\Phi$ and $\phi$ the standard normal CDF and PDF. The reservation price $u^{\text{rsv}}_m$ satisfies $\E[(G_m - u^{\text{rsv}}_m)^+] = c_m$ and can be obtained by simple root-finding on \Cref{eq:gaussian-voi}. Of course, while convenient, the Gaussian model is only an approximation, and real box values are rarely strictly Gaussian. We also explore a non-Gaussian signal model in \Cref{sec:non-gaussian-signal model} (for an alternative model based on Gaussian processes, see \citet{xie2024cost}), but find that while it is possible to improve fit to the empirical distribution, it does not yield an improvement in overall routing success and inspection cost.

\textbf{An extension for handling correlated values.} In routing, the Pandora-OI and Pandora-NI policies described above must contend with an additional complication: the hidden values of the boxes are not independent. For some difficult prompts, all specialist scores $G_m$ will tend to fall below the prediction $\mu_m$; for easy prompts, they will cluster above it. This correlation is particularly strong in the EmbedLLM domain, where there are $>100$ routing targets, some of which are extremely similar (e.g., different fine-tunings of the same base model).
To handle this, we propose a heuristic approximation that recomputes reservation prices at each step of the sequential policy (inspired by \citet{gergatsouli2023weitzman}, and also similar to the Gaussian process updates in \citet{xie2024cost}). On calibration data we estimate the parameters of a multivariate Gaussian model $\mathbf{G} \sim \mathcal{N}(\mu, \Sigma).$ After each box is opened, we condition on the observed values $\mathbf{G}_{\text{observed}}$ and compute $P(\mathbf{G}_{\text{unobserved}} \mid \mathbf{f}, \mathbf{G}_{\text{observed}})$ via the standard multivariate Gaussian posterior. We then make a mean-field approximation to this posterior, yielding updated marginal distributions for each remaining box, from which we recompute the reservation prices. The sequential policy then proceeds as before with these new parameters.

\begin{algorithm}[t]
\caption{Pandora's Router (with non-obligatory inspection)}
\label{alg:pandora-ni}
\small
\begin{algorithmic}[1]
\Require Cheap value estimates $\mathbf{f} = (f_1(x), \ldots, f_M(x))$, multi-variate Gaussian signal model parameters $(h, \Sigma)$, costs $\{c_m\}$, number of Monte Carlo samples $S$.
\State Compute predicted means $\mu_m \gets h(\mathbf{f})_m$ for all $m$
\State Compute reservation prices $u^{\text{rsv}}_m$ via root-finding on \Cref{eq:reservation-price}
\State Compute backup prices $u^{\text{backup}}_m$ for all $m$ via root-finding on \Cref{eq:backup-price}
\For{each candidate held-out box $m = 0, 1, \ldots, M$}
    \State \textcolor{DarkGreen}{// Note: $m = 0$ corresponds to Pandora-OI on $\{1, \ldots, M\}$ with $u_0^\mathrm{backup} = -\infty.$}
    \State \textcolor{DarkGreen}{// $\hat{\nu}_m$ is an MC estimate of the expected committing-policy payoff $\nu_m$ (\Cref{eq:committing-payoff}).}
    \State \multiline{Estimate $\hat{\nu}_m$ by running Pandora-OI on $\{1,\ldots,M\}\setminus\{m\}$ with initial value $v^* = u^{\text{backup}}_m$, averaged over $S$ samples of realized $\tilde{\mathbf{G}} = (\tilde{G}_1, \ldots, \tilde{G}_M)$ where $\tilde{\mathbf{G}} \sim \mathcal{N}(\mu, \Sigma)$.}
\EndFor
\State Select held-out box $m^* = \argmax_m \hat{\nu}_m$
\State Run Pandora-OI on $\{1,\ldots,M\}\setminus\{m^*\}$ with initial value $v^* = u^{\text{backup}}_{m^*}$, using actual $g$-queries
\If{no boxes were opened \textbf{or} the best realized value $\leq u^{\text{backup}}_{m^*}$}
    \State \Return sealed holdout $m^*$
\Else
    \State \Return the best opened box
\EndIf
\end{algorithmic}
\end{algorithm}

\paragraph{Experimental setup.} We evaluate Pandora's Router on the three domains described in \S\ref{sec:experimental-domains} (MATH, RAG, and EmbedLLM), and compare with the following methods:
\begin{itemize}[leftmargin=*,itemsep=5pt]
\item \textbf{$\mathbf{f}$-only}: Route using the cheap estimator $f_m$ only (never opening any box).
\item \textbf{$\mathbf{g}$-always}: Route using the expensive estimator $g_m$ only (always opening all boxes).
\item \textbf{Top-$\mathbf{2}$}: Always queries $g_m$ for the two models with highest $f_m$ (open the two expected best boxes).
\item \textbf{Coin Flip}: Query $g_m$ with probability $\frac{1}{2}$; route to the maximum observed (randomly open boxes).

\end{itemize}

We also evaluate two ablations that use the same inspection budget selected by Pandora's Router, but use different methods to decide which boxes to open (that is, instead of reservation prices). Specifically, we first run Pandora's Router to count how many times, $N_\mathrm{pr}$, in total $g_m$ was queried  over a test set (in hindsight), and then reallocate those $N_\mathrm{pr}$ inspections according to the following rules:\looseness=-1
\begin{itemize}[leftmargin=*, itemsep=5pt]
\item \textbf{Random-$\mathbf{N_{\mathrm{pr}}}$:} A simple control where $g_m$ is randomly queried up to $N_\mathrm{pr}$ times over the entire test set; for every example we route to the best candidate among the randomly inspected options. We route to $\argmax_m f_m$ as a default if no inspections were made for that example.
    \item \textbf{Margin-$\mathbf{N_{\mathrm{pr}}}$}: An uncertainty-based heuristic that queries $g_m$ for specialists whose $f$-scores are close to being best: letting $m' = \argmax_j f_j$, if $f_{m'} - f_m$ is small, we inspect both $g_m$ and $g_{m'}$. Again, we inspect $g_m$ for at most $N_\mathrm{pr}$ specialists over the entire test set, and route to $m'$ as a default.\looseness=-1
\end{itemize}
The primary metric is \emph{regret $+$ inspection cost}: routing regret (the gap between the selected specialist's true, realized reward and the oracle best specialist in hindsight) plus the total cost of all $g$-queries. Although our method generalizes to heterogeneous costs, for simplicity we use a uniform inspection cost $c_m = c_g$ $\forall m \in \{1, \ldots, M\}$, and sweep $c_g$ to trace out the cost versus performance frontier.

\begin{table}[t]
\centering
\small
\begin{tabular}{l|ccc|ccc|ccc}
\toprule
\multirow{2}{*}{\textbf{Method}} & \multicolumn{3}{c|}{MATH} & \multicolumn{3}{c|}{RAG} & \multicolumn{3}{c}{EmbedLLM} \\
&\textbf{Regret} & \textbf{Cost} & \textbf{Total} & \textbf{Regret} & \textbf{Cost} & \textbf{Total} & \textbf{Regret} & \textbf{Cost} & \textbf{Total}\\
\midrule
\text{$f$-only} & 0.117 & 0.000 & 0.117 & 0.150 & 0.000 & 0.150 & 0.393 & 0.000 & 0.393 \\
\text{$g$-only} & 0.090 & 0.038 & 0.128 & 0.084 & 0.057 & 0.141 & 0.370 & 1.986 & 2.356 \\
\text{Top-$2$} & 0.090 & 0.038 & 0.128 & 0.107 & 0.038 & 0.146 & 0.402 & 0.036 & 0.438 \\
\text{Coin Flip} & 0.107 & 0.019 & 0.127 & 0.122 & 0.029 & 0.151 & 0.398 & 0.992 & 1.390 \\
\cmidrule(lr){1-10}
\text{Random-$N_{\mathrm{pr}}$} & 0.164 & 0.011 & 0.175 & 0.135 & 0.027 & 0.162 & 0.363 & 0.075 & 0.438 \\
\text{Margin-$N_{\mathrm{pr}}$} & 0.094 & 0.011 & 0.105 & 0.101 & 0.027 & 0.128 & 0.314 & 0.075 & 0.389 \\
\cmidrule(lr){1-10}
\text{Pandora's Router} & 0.094 & 0.011 & \textbf{0.105} & 0.091 & 0.027 & \textbf{0.118} & 0.311 & 0.075 & \textbf{0.386} \\
\bottomrule
\end{tabular}
\caption{Routing evaluation results averaged across query costs $c_g$. Lower is better on all metrics; the lowest average regret + cost measured per setting is \textbf{bolded}. See  \Cref{sec:supplemental-numerical-significance} in the Appendix for a breakdown of results  per cost level $c_g$, together with paired statistical significance tests.}
\label{tab:overall-results}
\end{table}

\paragraph{Results.} 
Results for the full set of baselines  are shown in \Cref{tab:overall-results} for all three datasets, averaging over all values of $c_g$ (see \Cref{sec:supplemental-numerical-significance} for results per $c_g$; the values tested correspond to those in Figure~\ref{fig:results}). Pandora's Router is reliably the best. The two inspection ablations, \text{Margin-$N_{\mathrm{pr}}$} and \text{Random-$N_{\mathrm{pr}}$}, borrow the query budget from Pandora's Router, but achieve higher regret because they do not use the budget as effectively. Differences in particular between \text{Margin-$N_{\mathrm{pr}}$}, which is the most competitive comparison, and Pandora's Router at each specific cost level are  shown in \Cref{fig:results} (for clarity in the figure, we compare Pandora's Router with $f$-only, $g$-only, and the margin baseline). 
Pandora minimizes regret plus inspection cost at nearly all cost levels $c_g$, in all settings. When the cost of $g$ is low, Pandora's Router queries it for nearly every specialist, and all methods except $f$-only (which never queries $g$) perform similarly. As $c_g$ increases, the $g$-always baseline continues to make a fixed number of queries regardless of cost, while $f$-only still never queries at all. Pandora's Router interpolates between these extremes, querying $g$ only when the value of information exceeds the cost. As a result, the total regret + inspection cost of Pandora's Router tracks the lower envelope of the baselines across the full $c_g$ range.\looseness=-1

\begin{figure}
\begin{tikzpicture}
  \begin{groupplot}[
    group style={
      group size=3 by 1,
      horizontal sep=1cm,
      x descriptions at=edge bottom
    },
    width=0.33\textwidth,
    height=0.24\textwidth,
    xlabel={$c_g$},
    grid=major,
    label style={font=\scriptsize},
    tick label style={font=\footnotesize},
    title style={font=\scriptsize},
    xmode=log,
  ]

    \nextgroupplot[title={MATH}, ymax=0.13, ylabel={Regret + Inspection Cost}, legend to name=sharedlegend, legend columns=4, legend style={font=\footnotesize}]
    \addplot[color={rgb,255:red,44;green,160;blue,44}, mark=triangle, mark size=1.5, thick] coordinates {
      (0.00147, 0.116539) (0.00215, 0.116539) (0.00316, 0.116539) (0.00464, 0.116539) (0.00681, 0.116539) (0.01, 0.116539) (0.01468, 0.116539) (0.02154, 0.116539) (0.03162, 0.116539) (0.04642, 0.116539) (0.06813, 0.116539)
    }; \addlegendentry{$f$-only}
    \addplot[color={rgb,255:red,214;green,39;blue,40}, mark=square, mark size=1.5, thick] coordinates {
      (0.00147, 0.0929178) (0.00215, 0.0942778) (0.00316, 0.0962978) (0.00464, 0.0992578) (0.00681, 0.103598) (0.01, 0.109978) (0.01468, 0.119338) (0.02154, 0.133058) (0.03162, 0.153218) (0.04642, 0.182818) (0.06813, 0.226238)
    }; \addlegendentry{$g$-always}
    \addplot[color={rgb,255:red,31;green,119;blue,180}, mark=o, mark size=1.5, thick] coordinates {
      (0.00147, 0.095244) (0.00215, 0.0959155) (0.00316, 0.0981051) (0.00464, 0.0994982) (0.00681, 0.101053) (0.01, 0.106919) (0.01468, 0.112598) (0.02154, 0.111031) (0.03162, 0.111003) (0.04642, 0.113357) (0.06813, 0.109417)
    }; \addlegendentry{Pandora's Router}
    \addplot[color={rgb,255:red,148;green,103;blue,189}, mark=diamond, mark size=1.5, thick] coordinates {
      (0.00147, 0.0928829) (0.00215, 0.0967321) (0.00316, 0.0977329) (0.00464, 0.0993982) (0.00681, 0.101242) (0.01, 0.103664) (0.01468, 0.108309) (0.02154, 0.113386) (0.03162, 0.11202) (0.04642, 0.112735) (0.06813, 0.116539)
    }; \addlegendentry{Margin-$N_{pr}$}

    \nextgroupplot[title={RAG}, ymax=0.2]
    \addplot[color={rgb,255:red,44;green,160;blue,44}, mark=triangle, mark size=1.5, thick] coordinates {
      (0.00147, 0.150234) (0.00215, 0.150234) (0.00316, 0.150234) (0.00464, 0.150234) (0.00681, 0.150234) (0.01, 0.150234) (0.01468, 0.150234) (0.02154, 0.150234) (0.03162, 0.150234) (0.04642, 0.150234) (0.06813, 0.150234)
    };
    \addplot[color={rgb,255:red,214;green,39;blue,40}, mark=square, mark size=1.5, thick] coordinates {
      (0.00147, 0.088035) (0.00215, 0.090075) (0.00316, 0.093105) (0.00464, 0.097545) (0.00681, 0.104055) (0.01, 0.113625) (0.01468, 0.127665) (0.02154, 0.148245) (0.03162, 0.178485) (0.04642, 0.222885) (0.06813, 0.288015)
    };
    \addplot[color={rgb,255:red,31;green,119;blue,180}, mark=o, mark size=1.5, thick] coordinates {
      (0.00147, 0.0895061) (0.00215, 0.0904973) (0.00316, 0.0924703) (0.00464, 0.0979087) (0.00681, 0.104191) (0.01, 0.109644) (0.01468, 0.12246) (0.02154, 0.135927) (0.03162, 0.14365) (0.04642, 0.15742) (0.06813, 0.1555)
    };
    \addplot[color={rgb,255:red,148;green,103;blue,189}, mark=diamond, mark size=1.5, thick] coordinates {
      (0.00147, 0.0996467) (0.00215, 0.103466) (0.00316, 0.110033) (0.00464, 0.112612) (0.00681, 0.118566) (0.01, 0.124222) (0.01468, 0.134038) (0.02154, 0.143224) (0.03162, 0.153696) (0.04642, 0.155358) (0.06813, 0.150234)
    };

    \nextgroupplot[title={EmbedLLM}, ymax=0.45]
    \addplot[color={rgb,255:red,44;green,160;blue,44}, mark=triangle, mark size=1.5, thick] coordinates {
      (1e-05, 0.393312) (0.0001, 0.393312) (0.0003, 0.393312) (0.001, 0.393312) (0.003, 0.393312) (0.01, 0.393312) (0.03, 0.393312) (0.1, 0.393312) 
    };
    \addplot[color={rgb,255:red,214;green,39;blue,40}, mark=square, mark size=1.5, thick] coordinates {
      (1e-05, 0.370964) (0.0001, 0.380864) (0.0003, 0.402864) (0.001, 0.479864) (0.003, 0.699864) (0.01, 1.46986) (0.03, 3.66986) (0.1, 11.3699) 
    };
    \addplot[color={rgb,255:red,31;green,119;blue,180}, mark=o, mark size=1.5, thick] coordinates {
      (1e-05, 0.371963) (0.0001, 0.372009) (0.0003, 0.373577) (0.001, 0.382453) (0.003, 0.390407) (0.01, 0.398565) (0.03, 0.404505) (0.1, 0.395146) 
    };
    \addplot[color={rgb,255:red,148;green,103;blue,189}, mark=diamond, mark size=1.5, thick] coordinates {
      (1e-05, 0.376963) (0.0001, 0.377008) (0.0003, 0.377793) (0.001, 0.38743) (0.003, 0.398037) (0.01, 0.403009) (0.03, 0.400709) (0.1, 0.393312) 
    };

  \end{groupplot}

  \node[below=1cm] at ($(group c2r1.south)$) {\ref{sharedlegend}};
\end{tikzpicture}
\vspace{-5pt}
\caption{Routing performance on the MATH, RAG, and EmbedLLM domains for varying costs of querying $g$, with $f = \text{KNN}_3$. The ideal value estimation policy would minimize regret + inspection cost at every cost level.
Pandora's Router achieves near-minimal total cost across the full range of $c_g$ (where $c_m = c_g$ for all $m$), querying $g$ frequently when it is cheap and rarely when it is expensive.\looseness=-1}
\label{fig:results}
\vspace{-5pt}
\end{figure}

As a secondary analysis, \Cref{fig:monetary-cost-vs-regret} in \Cref{sec:supp-monetary-cost-derivations} shows the cost-performance tradeoff offered by Pandora's Router, in terms of monetary costs and routing regret. The costs are computed from the Gemini API prices described in \Cref{sec:setup-value-costs}. 
When $c_g=0.001$, we are in a setting in which users are eager to pay for queries that improve routing; here Pandora's Router obtains regret that almost matches $g$-only, while incurring much lower inspection costs.
When $c_g=0.1$, we are in a setting in which users are unwilling to pay for queries to $g$; here Pandora's Router never queries $g$, and routing performance and cost match the $f$-only baseline. These tradeoffs are dynamically navigated by Pandora's Router.\looseness=-1

Note that in  the MATH experiment the aggregate results for Pandora's Router and \text{Margin-$N_\mathrm{pr}$} are nearly identical. Since there are only two routing targets, regret is determined largely by the decision about how many $g$ values to query: with two queries, both methods always select the $g$-maximizer; with zero queries both methods almost always select the $f$-maximizer (except in rare cases where the reserve price ordering is different from the ordering of $f$). Because \text{Margin-$N_\mathrm{pr}$} inherits the query budget ($N_\mathrm{pr}$) from Pandora's Router, it is unsurprising that its overall performance is very similar. \looseness=-1
\section{Pandora's Bidder}
\label{sec:pandora-bidder}

We now derive a simple but interesting extension of Pandora's Router to a decentralized setting in which the specialists control their own value estimates, and use them to participate in a marketplace. Pandora's Router assumes that a single decision-maker controls which boxes to open and which specialist to select. In many  settings, however, the specialists themselves are better positioned to estimate their own value:  a retrieval-augmented specialist has access to its own corpus, a math specialist can execute partial computations, and a domain expert may have proprietary benchmarks. None of these resources need be visible to a centralized router. Decentralization also has a number of practical advantages: new specialists can join without retraining the routing model, and the cost of value estimation is borne by the specialists rather than the platform.\looseness=-1

As an alternative to a centralized router, consider a market-based mechanism where each specialist can purchase the right to answer a query. We formalize this via a posted-price mechanism corresponding to a single stage of the ascending-price framework with costly preference elicitation studied by \citet{parkes2005auction}.  We consider a leave-one-out setting with a single strategic specialist. A platform collects nonstrategic value estimates from $M - 1$ specialists and posts the best estimate as the price at which the remaining specialist can claim the query. The strategic specialist must then apply the same value-of-information (VoI) reasoning from Pandora's Router to decide whether this price is worth accepting, and whether or not to pay to observe a refined estimate before making this decision.

Concretely, for a given query $x$, the platform solicits $G_j$ from each of the $M - 1$ nonstrategic specialists and sets the posted price as $p = \max_{j \neq m} G_j$. Note that the platform could incorporate a markup to extract profit, but here we simply use the best competing estimate. The strategic specialist $m$ then faces a decision: accept the price (winning the right to answer the query, instead of the platform's choice), or decline (in which case the platform routes to the specialist that set the price). Specialist $m$ begins with only the cheap estimate, which yields a predicted mean $\mu_m(x) = h(\mathbf{f})_m$ for $G_m$, as in \Cref{eq:gaussian_approx}. The expected gain from paying $c_m$ to observe $G_m$ before responding to the price $p$ is\looseness=-1
\begin{equation}
\label{eq:bidding_gain}
    \text{VoI}(p) = \mathbb{E}[(G_m - p)^+] - (\mu_m - p)^+.
\end{equation}
This is the difference in expected profit between deciding after observing $G_m$ and deciding based only on $\mu_m$. The structure mirrors the reservation price computation for Pandora's Router in \S\ref{sec:pandora-router}, but with the price $p$ playing the role of the outside option. When $p$ is low relative to $\mu_m$, the specialist can confidently accept based on $f$ alone; when $p$ is high, the specialist can confidently decline. The value of information peaks when $p$ is close to $\mu_m$ and the correct action is uncertain.
The condition $\text{VoI}(p) = c_m$ defines an \emph{interval} $[p_{\text{lo}}, p_{\text{hi}}]$ within which it is worth paying for the refined estimate~\citep{parkes2005auction}. Using the same Gaussian model as in \Cref{eq:gaussian_approx,eq:gaussian-voi}, the boundaries satisfy\looseness=-1
\begin{equation}
\label{eq:gaussian_approx_bidding}
(\mu_m - p)\,\Phi(\alpha_m) + \sigma_m\,\phi(\alpha_m) = c_m + (\mu_m - p)^+
\end{equation}
with $\alpha_m = (\mu_m - p)/\sigma_m$, and $\Phi$ and $\phi$ the standard normal CDF and PDF. Like the reservation price, these values are obtained via standard root-finding techniques. The resulting bidding strategy pays $c_m$ to observe $G_m$ when $p_{\text{lo}} \leq p \leq p_{\text{hi}}$, accepting if $G_m > p$. Outside this interval, refinement is not worth it: the specialist  simply accepts if $p < p_{\text{lo}}$ and declines if $p >p_{\text{hi}}$.\looseness=-1

\begin{figure}[t]
\centering
\begin{tikzpicture}
\begin{groupplot}[
  group style={
    group size=3 by 2,
    horizontal sep=1cm,
    vertical sep=0.8cm,
    ylabels at=edge left,
    xlabels at=edge bottom,
  },
  xmode=log,
  xlabel={$c_g$},
  grid=major,
  width=.35\linewidth,
  height=.24\linewidth,
  label style={font=\scriptsize},
  tick label style={font=\scriptsize},
  title style={font=\scriptsize},
  legend style={font=\scriptsize},
  scaled y ticks=false,
  yticklabel style={/pgf/number format/fixed, /pgf/number format/precision=3},
]
\nextgroupplot[title={MATH}, ylabel={Surplus Regret}, ymin=0.0434443, ymax=0.0712512, legend to name=shared_legend, legend columns=3]
\addplot[
  color={rgb,255:red,44;green,160;blue,44},
  mark=triangle,
  mark size=1.5,
  thick,
] coordinates {
  (0.00316228, 0.0630727)
  (0.00681292, 0.0630727)
  (0.014678, 0.0630727)
  (0.0316228, 0.0630727)
  (0.0681292, 0.0630727)
  (0.14678, 0.0630727)
  (0.316228, 0.0630727)
};
\addlegendentry{$f$-only}
\addplot[
  color={rgb,255:red,214;green,39;blue,40},
  mark=square,
  mark size=1.5,
  thick,
] coordinates {
  (0.00316228, 0.0467157)
  (0.00681292, 0.0503663)
  (0.014678, 0.0582314)
  (0.0316228, 0.0751762)
  (0.0681292, 0.111683)
  (0.14678, 0.190333)
  (0.316228, 0.359781)
};
\addlegendentry{$g$-always}
\addplot[
  color={rgb,255:red,31;green,119;blue,180},
  mark=o,
  mark size=1.5,
  thick,
] coordinates {
  (0.00316228, 0.0478131)
  (0.00681292, 0.0522593)
  (0.014678, 0.0580752)
  (0.0316228, 0.0613263)
  (0.0681292, 0.0642749)
  (0.14678, 0.0630727)
  (0.316228, 0.0630727)
};
\addlegendentry{Pandora's Bidder}
\nextgroupplot[title={RAG}, ymin=0.0670537, ymax=0.120806]
\addplot[
  color={rgb,255:red,44;green,160;blue,44},
  mark=triangle,
  mark size=1.5,
  thick,
] coordinates {
  (0.00316228, 0.104996)
  (0.00681292, 0.104996)
  (0.014678, 0.104996)
  (0.0316228, 0.104996)
  (0.0681292, 0.104996)
  (0.14678, 0.104996)
  (0.316228, 0.104996)
};
\addplot[
  color={rgb,255:red,214;green,39;blue,40},
  mark=square,
  mark size=1.5,
  thick,
] coordinates {
  (0.00316228, 0.0733775)
  (0.00681292, 0.0770281)
  (0.014678, 0.0848932)
  (0.0316228, 0.101838)
  (0.0681292, 0.138344)
  (0.14678, 0.216995)
  (0.316228, 0.386443)
};
\addplot[
  color={rgb,255:red,31;green,119;blue,180},
  mark=o,
  mark size=1.5,
  thick,
] coordinates {
  (0.00316228, 0.0752741)
  (0.00681292, 0.0804621)
  (0.014678, 0.0888411)
  (0.0316228, 0.0974313)
  (0.0681292, 0.106849)
  (0.14678, 0.104996)
  (0.316228, 0.104996)
};
\nextgroupplot[title={EmbedLLM}, ymin=0.0657536, ymax=0.0814466]
\addplot[
  color={rgb,255:red,44;green,160;blue,44},
  mark=triangle,
  mark size=1.5,
  thick,
] coordinates {
  (1e-05, 0.076831)
  (0.0001, 0.076831)
  (0.0003, 0.076831)
  (0.001, 0.076831)
  (0.003, 0.076831)
  (0.01, 0.076831)
  (0.03, 0.076831)
  (0.1, 0.076831)
  (0.3, 0.076831)
};
\addplot[
  color={rgb,255:red,214;green,39;blue,40},
  mark=square,
  mark size=1.5,
  thick,
] coordinates {
  (1e-05, 0.0675998)
  (0.0001, 0.0676898)
  (0.0003, 0.0678898)
  (0.001, 0.0685898)
  (0.003, 0.0705898)
  (0.01, 0.0775898)
  (0.03, 0.0975898)
  (0.1, 0.16759)
  (0.3, 0.36759)
};
\addplot[
  color={rgb,255:red,31;green,119;blue,180},
  mark=o,
  mark size=1.5,
  thick,
] coordinates {
  (1e-05, 0.0675766)
  (0.0001, 0.0675555)
  (0.0003, 0.0676006)
  (0.001, 0.0679616)
  (0.003, 0.0688963)
  (0.01, 0.0715672)
  (0.03, 0.0760648)
  (0.1, 0.0768541)
  (0.3, 0.076831)
};
\nextgroupplot[ylabel={Efficiency Regret}, ymin=0.0219998, ymax=0.0540256]
\addplot[
  color={rgb,255:red,44;green,160;blue,44},
  mark=triangle,
  mark size=1.5,
  thick,
] coordinates {
  (0.00316228, 0.0446062)
  (0.00681292, 0.0446062)
  (0.014678, 0.0446062)
  (0.0316228, 0.0446062)
  (0.0681292, 0.0446062)
  (0.14678, 0.0446062)
  (0.316228, 0.0446062)
};
\addplot[
  color={rgb,255:red,214;green,39;blue,40},
  mark=square,
  mark size=1.5,
  thick,
] coordinates {
  (0.00316228, 0.0257676)
  (0.00681292, 0.0294182)
  (0.014678, 0.0372833)
  (0.0316228, 0.0542281)
  (0.0681292, 0.0907345)
  (0.14678, 0.169385)
  (0.316228, 0.338833)
};
\addplot[
  color={rgb,255:red,31;green,119;blue,180},
  mark=o,
  mark size=1.5,
  thick,
] coordinates {
  (0.00316228, 0.0267691)
  (0.00681292, 0.0312194)
  (0.014678, 0.0360174)
  (0.0316228, 0.0389545)
  (0.0681292, 0.0442834)
  (0.14678, 0.0446062)
  (0.316228, 0.0446062)
};
\nextgroupplot[ymin=0.0326347, ymax=0.0672939]
\addplot[
  color={rgb,255:red,44;green,160;blue,44},
  mark=triangle,
  mark size=1.5,
  thick,
] coordinates {
  (0.00316228, 0.0571)
  (0.00681292, 0.0571)
  (0.014678, 0.0571)
  (0.0316228, 0.0571)
  (0.0681292, 0.0571)
  (0.14678, 0.0571)
  (0.316228, 0.0571)
};
\addplot[
  color={rgb,255:red,214;green,39;blue,40},
  mark=square,
  mark size=1.5,
  thick,
] coordinates {
  (0.00316228, 0.0367123)
  (0.00681292, 0.0403629)
  (0.014678, 0.048228)
  (0.0316228, 0.0651728)
  (0.0681292, 0.101679)
  (0.14678, 0.18033)
  (0.316228, 0.349778)
};
\addplot[
  color={rgb,255:red,31;green,119;blue,180},
  mark=o,
  mark size=1.5,
  thick,
] coordinates {
  (0.00316228, 0.0363628)
  (0.00681292, 0.040558)
  (0.014678, 0.047366)
  (0.0316228, 0.0528848)
  (0.0681292, 0.0594931)
  (0.14678, 0.0571)
  (0.316228, 0.0571)
};
\nextgroupplot[ymin=0.0457783, ymax=0.0798449]
\addplot[
  color={rgb,255:red,44;green,160;blue,44},
  mark=triangle,
  mark size=1.5,
  thick,
] coordinates {
  (1e-05, 0.0698253)
  (0.0001, 0.0698253)
  (0.0003, 0.0698253)
  (0.001, 0.0698253)
  (0.003, 0.0698253)
  (0.01, 0.0698253)
  (0.03, 0.0698253)
  (0.1, 0.0698253)
  (0.3, 0.0698253)
};
\addplot[
  color={rgb,255:red,214;green,39;blue,40},
  mark=square,
  mark size=1.5,
  thick,
] coordinates {
  (1e-05, 0.0497861)
  (0.0001, 0.0498761)
  (0.0003, 0.0500761)
  (0.001, 0.0507761)
  (0.003, 0.0527761)
  (0.01, 0.0597761)
  (0.03, 0.0797761)
  (0.1, 0.149776)
  (0.3, 0.349776)
};
\addplot[
  color={rgb,255:red,31;green,119;blue,180},
  mark=o,
  mark size=1.5,
  thick,
] coordinates {
  (1e-05, 0.0497742)
  (0.0001, 0.0498388)
  (0.0003, 0.0499709)
  (0.001, 0.050465)
  (0.003, 0.0515499)
  (0.01, 0.0547563)
  (0.03, 0.0614482)
  (0.1, 0.0698872)
  (0.3, 0.0698253)
};
\end{groupplot}
\node[below=1cm] at ($(group c1r2.south)!0.5!(group c3r2.south)$) {\ref{shared_legend}};
\end{tikzpicture}
\caption{Results for the posted-price auction across the three experimental domains for varying refinement cost $c_g$. The VoI-based bidding strategy interpolates between querying $g$ when cheap and declining when expensive, while static baselines either over-invest or under-invest in refinement.\looseness=-1}
\label{fig:bidding-results}
\end{figure}

\paragraph{Experimental setup.} We evaluate Pandora's Bidder in a leave-one-out setting across the three domains from \Cref{sec:experimental-domains}. For each query, one specialist is designated as the strategic bidder; the remaining $M - 1$ specialists submit their $g$ estimates nonstrategically, and the platform posts the best as the price via $p = \max_{j \neq m} G_j$. The strategic bidder then applies the VoI-based acceptance strategy above. We rotate the held-out specialist across all $M$ models and report the average. As in the centralized routing experiments, we sweep over $c_g$ to trace out the cost versus performance frontier.\looseness=-1
We measure two quantities: (1)  \emph{specialist surplus}, which is $R_m - p$ when the strategic specialist $m$ accepts the price and $0$ when it declines, as well as any incurred estimation costs; and (2) \emph{allocative efficiency}, which is the true reward $R_{\hat{m}}$ of the winning specialist ($m$ if it accepts, otherwise the platform-chosen specialist),  minus the $m$'s estimation costs. We exclude competitors' estimation costs; by treating their $g$ estimates as given, we isolate the contribution of the VoI-based policy. We compare against the {${g}${-always}} baseline (the bidder always pays for $g$) and the {${f}${-only}} baseline (the bidder always decides based on $f$). For both metrics, we measure the regret versus an oracle that knows $G_m$ for free.\looseness=-1

\paragraph{Results.} \Cref{fig:bidding-results} plots allocative efficiency and bidder surplus for different $c_g$.
Pandora's Bidder closely tracks the lower envelope of the regret of the two baselines across the full $c_g$ range. When $c_g$ is low, the bidder frequently refines its estimate, winning queries where it holds a true advantage. As $c_g$ increases, the bidder selectively defaults to the cheap estimate or declines the price, avoiding the cost collapse of the $g$-always baseline. The $f$-only baseline, conversely, leaves efficiency on the table at low costs by never investing in refinement. With decentralization, however, maximizing local surplus does not always lead to better overall allocative efficiency. For example, when the price is inaccurate (set using the cheap estimate $f$ rather than $g$ for the $M-1$ competitors) the strategic bidder captures higher surplus at the cost of overall allocative efficiency (see additional results in \Cref{sec:supplement-weak-opponent}). Overpriced queries offer insufficient expected profit to incentivize the bidder; declining them protects the bidder's surplus but assigns the input to a less capable specialist. This tension is absent in the centralized setting, where the router internalizes all costs and optimizes global welfare directly. Extending Pandora's Bidder to multi-round ascending-price mechanisms~\citep{parkes2005auction} may recover some of this efficiency loss.\looseness=-1

\section{Related Work}
\label{sec:related}

\paragraph{Information value theory.} Many scenarios require agents to decide whether to spend effort learning more about the world or to exploit what they already know. An early formalization of this decision problem is \emph{information value theory}~\citep{howard1966information}, which provides the general framework for Equations \eqref{eq:reservation-price} and  \eqref{eq:bidding_gain}.
Bayesian Q Learning is an application of information value theory that quantifies the downstream value of actions that increase the precision of the agent's beliefs about state-action values~\citep{dearden1998bayesian}. While similar in spirit, the core ideas of Bayesian Q Learning cannot easily be applied to model routing because state-action pairs cannot be visited more than once; in this work, however, we show that the form of the model routing decision problem admits specialized efficient algorithms based on Pandora's Box (formalized as Pandora's Router in \Cref{sec:pandora-router}).\looseness=-1

\paragraph{LLM routing.} Routing user inputs within a collection of language models has been proposed as a way to reduce costs while maintaining high accuracy~\citep{chen2023frugalgpt} and to leverage complementary capabilities of heterogenous models~\citep{shnitzer2023large}. There are now several routing benchmarks~\citep[e.g.,][]{hu2024routerbench,feng2026moco}, of which we use EmbedLLM~\citep{zhuang2025embedllm}. Most routing approaches require predicting the answer quality for each model on a given input, using pointwise~\citep{jitkrittum2025universal} or pairwise~\citep{ong2025routellm} data. In all of this prior work, value estimation is treated as a cost-free operation with a fixed error profile. Our contribution in Pandora's Router in \Cref{sec:pandora-router} is to treat value estimation as a decision involving a cost-performance tradeoff of its own.\looseness=-1

\paragraph{Adaptive retrieval and tool use.} Closely related to routing is the decision of whether to invoke external tools or retrievers. Adaptive RAG frameworks~\citep{jeong2024adaptive} dynamically adjust retrieval strategies depending on the query complexity to save computational costs. Similarly, recent evaluations of adaptive retrieval~\citep{moskvoretskii2025adaptive} highlight the tension between using internal LLM uncertainty estimation vs. lightweight external heuristics. We formalize this exact tradeoff---that is, deciding whether to spend computational power on self-assessment or tool-planning---through a value-of-information perspective that is both theoretically principled and empirically effective.

\paragraph{Pandora's Box and optimal search.} Prior work on sequential search with costly inspection is reviewed in \S\ref{sec:pandora}. The Pandora's Box problem is also used as a motivating framework for LLM reasoning about cost-uncertainty tradeoffs by \citet{ding2026calibrate}, as well as by \citet{xie2024cost} for cost-aware Bayesian optimization in which the reservation price (for obligatory inspection) is used to drive an acquisition function. In contrast, we instantiate the non-obligatory variant of the Pandora's Box algorithm to drive LLM routing specifically, and we derive closed-form value-of-information expressions under a practical Gaussian signal model that make our algorithm simple to implement and execute. On a more technical level, we focus on a \emph{committing policy} approximation to the non-obligatory inspection variant of the Pandora's Box problem~\citep{beyhaghi2019pandora}. Another class of approximations uses randomization~\citep{beyhaghi2023pandora,scully2024local}, which we may consider in future work.\looseness=-1

\paragraph{Decentralized AI and multi-agent systems.} Decentralized coordination through market mechanisms has a long history in AI, from ContractNet~\citep{davis1983negotiation}, which introduced negotiation-based task allocation, to AgentNet~\citep{yang2025agentnet}, which builds dynamic graph topologies between LLM agents. Our work on Pandora's Bidder in \Cref{sec:pandora-bidder} is most informed by auction-based approaches, which were applied to ContractNet by \citet{sandholm1993implementation} and linked to decentralized RL by \citet{chang2020decentralized}. The combination of auctions with cost-accuracy tradeoffs in value estimation is generally intractable, but \citet{parkes2005auction} offers empirically-validated heuristics.
For a survey on the problem of delegation between AI components, see \citet{tomavsev2026intelligent}. 
Auctions and related mechanisms have received increasing attention as an approach for decentralized orchestration of LLMs~\citep{dutting2024mechanism,collina2025emergent,Zhao2025LLMAuctionGA,tarasova2025decentralized,drwell}, but these frameworks generally assume that agents know their own valuations \textit{a priori}, while we describe how agents can reason about costly value estimation when they are unknown.\looseness=-1
\section{Conclusion}
Value-based routing is a natural approach to optimizing the allocation of computing power to AI model inputs. We have argued that value estimation itself brings cost-accuracy tradeoffs, and we show how the Pandora's Box problem offers a unified framework for value-based routing under heterogeneous value estimators. We also extend the framework to a decentralized setting, in which specialists reason about the value of information in an auction-based allocation mechanism.

\paragraph{Limitations.}
Several limitations suggest promising directions for future work. The Gaussian signal model, while tractable, may not capture the heavy tails or multimodality present in some domains (see \Cref{sec:non-gaussian-signal model} for coverage statistics and a pilot study of a non-Gaussian signal model). The two-estimator restriction ($f$ and $g$) could be relaxed to chains or trees of estimators with varying performance tradeoffs. And the myopic VoI computation in the leave-one-out auction does not account for strategic anticipation of future bids; an extension to a multi-round ascending-price auction~\citep{parkes2005auction} is a natural next step. Furthermore, as discussed in \Cref{sec:pandora-bidder}, when bidding against weaker adversaries that place poor bids, Pandora's Bidder can improve its own utility at the expense of the overall welfare.\looseness=-1

\section*{Acknowledgements}
We thank Alekh Agarwal, Jonathan Berant, and Ian Gemp for helpful research discussions, and Chris Dyer and Artem Sokolov for helpful comments and feedback on the manuscript. This research also benefited from early-stage discussions with Will Dabney.

\bibliography{main}

\clearpage
\appendix
\setcounter{theorem}{0}
\numberwithin{theorem}{section}
\addcontentsline{toc}{chapter}{Appendices}
\etocsetnexttocdepth{2}
\localtableofcontents
\clearpage

\section{Pandora's Routing Objective}
\label{app:pandoras_objective}

In \Cref{sec:pandora-router}, we connect routing to Pandora's Box by treating each specialist as a box and the costly value estimate as the value revealed by opening that box. The revealed value, however, is not $R_m$ itself, but $G_m$, which is the costly, more accurate estimate of $R_m$. In this section we formally justify the objective of searching for the maximum value of $G_m$ when $R_m$ is unknown.

Let $\mathcal I_0$ denote the zero-cost information available before any costly queries, including the prompt and the cheap estimates $\mathbf F$. Inspecting specialist $m$ reveals costly information $Z_m$ at cost $c_m$. In the ideal calibrated case, the value revealed by opening box $m$ is
\begin{equation}
\label{eq:ideal-posterior-value}
G_m^\star
=
\E[R_m \mid \mathcal I_0,Z_m].
\end{equation}
Thus $G_m^\star$ is the posterior decision value of specialist $m$ after $Z_m$ has been acquired.\looseness=-1

\noindent \textbf{Admissible Policies.} We restrict attention to sequential policies that only use information that has actually been observed. Let  $\mathcal F_0=\mathcal I_0$ be the zero-cost information state, and let $O_0=\emptyset$ be the set of opened specialists. At each step $t$, a policy either stops, or selects an unopened specialist $A_{t+1}\in \mathcal M\setminus O_t$ to inspect. The stopping
decision and the choice of $A_{t+1}$ are required to be
$\mathcal F_t$-measurable.  If $A_{t+1}$ is inspected, the policy pays cost $c_{A_{t+1}}$, observes $Z_{A_{t+1}}$, and the  state updates to\looseness=-1
\[
O_{t+1}=O_t\cup\{A_{t+1}\},
\qquad
\mathcal F_{t+1}
=
\mathcal F_t
\vee
\sigma(A_{t+1},Z_{A_{t+1}}).
\]
where $\sigma(A_{t+1},Z_{A_{t+1}})$ denotes the sigma algebra generated by $A_{t+1}$ and $Z_{A_{t+1}}$. Let $\tau$ be the stopping time, $O=O_\tau$ the final opened set, and
\[
C_O=\sum_{i\in O}c_i
\]
the total inspection cost. The final selected specialist $\widehat M$ must be measurable with respect to the terminal information $\mathcal F_\tau$. We call such a policy \emph{admissible}. In the obligatory-inspection case we require
$\widehat M\in O$ almost surely; in the non-obligatory case, $\widehat M$ may also be unopened.

\begin{remark}
Here $\mathcal I_0$ should be interpreted as the information available at zero computational cost. Costly computations are represented by $Z_m$ and are not included in the information state until specialist $m$ is inspected, even when they are deterministic functions of the prompt.
\end{remark}

\begin{proposition}[Opened-value reduction]
\label{prop:opened-value-reduction}
Suppose the collection $\{(R_m,Z_m)\}_{m=1}^M$ is conditionally independent
given $\mathcal I_0$. Let $\Pi$ be any admissible policy that opens a random set
$O$, incurs cost $C_O=\sum_{i\in O}c_i$, and selects an opened specialist
$\widehat M\in O$ almost surely. Define
\[
G_m^\star=\E[R_m\mid \mathcal I_0,Z_m].
\]
Then
\[
\E[R_{\widehat M}-C_O]
=
\E[G_{\widehat M}^\star-C_O].
\]
Consequently, among policies that select opened specialists, maximizing
expected realized reward is equivalent to maximizing the opened posterior
decision value net of inspection costs.
\end{proposition}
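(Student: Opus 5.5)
The plan is to decompose the selection event over specialists and reduce the claim to a per-specialist conditional expectation identity. Since $C_O$ appears on both sides, it suffices to show $\E[R_{\widehat M}] = \E[G^\star_{\widehat M}]$. Because $\widehat M\in O$ almost surely,
\[
R_{\widehat M}=\sum_{m=1}^M R_m\,\mathbf 1\{\widehat M=m,\ m\in O\},
\]
and similarly for $G^\star_{\widehat M}$. It therefore suffices to prove, for each $m$,
\[
\E\bigl[R_m \mathbf 1\{\widehat M=m, m\in O\}\bigr]=\E\bigl[G^\star_m \mathbf 1\{\widehat M=m, m\in O\}\bigr].
\]
I will assume integrability of the $R_m$ throughout.

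The key step is to show that $\E[R_m\mid \mathcal I_0, Z_m, \mathcal G_{-m}] = G^\star_m$, where $\mathcal G_{-m}=\sigma(Z_j: j\neq m)$. This follows from the conditional independence of $(R_m,Z_m)$ from $\{(R_j,Z_j)\}_{j\ne m}$ given $\mathcal I_0$. Conditional independence lets us drop the extra conditioning on the other specialists' information.

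Next I need the event $\{\widehat M=m, m\in O\}$ to be measurable with respect to $\mathcal H:=\sigma(\mathcal I_0, Z_1,\dots,Z_M)$, ignoring any independent randomization in the policy. This holds because every decision is $\mathcal F_t$-measurable and $\mathcal F_t\subseteq\mathcal H$. Then the tower property over $\mathcal H$ gives
\[
\E\bigl[R_m\mathbf 1_E\bigr]=\E\bigl[\E[R_m\mid\mathcal H]\mathbf 1_E\bigr]=\E\bigl[G^\star_m\mathbf 1_E\bigr].
\]
The last equality uses the key step together with $\mathcal H=\sigma(\mathcal I_0,Z_m)\vee\mathcal G_{-m}$.

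This argument never actually uses that $m\in O$. So I will also check carefully whether the restriction to opened specialists matters. It matters only for interpretation: $G^\star_m$ is observed exactly when $m$ is opened. The main obstacle is making measurability rigorous when the policy uses internal randomization or a random stopping time. I would handle this by enlarging $\mathcal H$ with an independent randomization variable $U$, which is independent of everything else and so preserves the conditional expectation identity. I would handle $\tau$ by noting that $\tau\le M$, so that all events can be decomposed over the finitely many histories.

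Summing over $m$ and subtracting $\E[C_O]$ gives the result.
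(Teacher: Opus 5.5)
Your proof is correct, but it takes a different route from the paper's.

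The paper conditions on the terminal information $\mathcal F_\tau$ and writes $\E[R_{\widehat M}\mid\mathcal F_\tau]=\sum_m \mathbf 1\{\widehat M=m\}\,\E[R_m\mid\mathcal F_\tau]$. It then asserts that, on the event that $m$ was opened, $\E[R_m\mid\mathcal F_\tau]=G^\star_m$. The justification given is that the remaining content of $\mathcal F_\tau$ (other opened signals and the adaptive decisions) is irrelevant by conditional independence.

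You instead condition on the larger, non-adaptive $\sigma$-algebra $\mathcal H=\sigma(\mathcal I_0,Z_1,\dots,Z_M)$, which contains every signal whether or not it is ever observed. There $\E[R_m\mid\mathcal H]=G^\star_m$ holds globally rather than on an event, by conditional independence (weak union). The only policy-dependent input is then $\mathcal F_\tau\subseteq\mathcal H$, which you get by induction on $t$ together with $\tau\le M$.

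Your route buys rigor at exactly the step the paper treats informally. The paper's claim is a localized identity for a stopped $\sigma$-algebra. That $\sigma$-algebra records which boxes were opened and when the search stopped, and these depend adaptively on $Z_m$ itself. Justifying the claim properly amounts to your argument applied to the sets $B\cap\{m\in O\}$ with $B\in\mathcal F_\tau$, together with the $\mathcal F_\tau$-measurability of $G^\star_m\mathbf 1\{m\in O\}$.

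Your remark that $m\in O$ is never used is also correct. Your argument gives $\E[R_{\widehat M}]=\E[G^\star_{\widehat M}]$ for every admissible policy, including non-obligatory ones that select a sealed box. This is consistent with \Cref{cor:sealed-value}. So the opened-selection hypothesis matters only for the ``consequently'' part: on opened boxes $G^\star_m$ is actually observed, so the policy can maximize it.

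The paper's formulation buys that interpretive link directly. It exhibits $G^\star_m$ as the posterior mean given what the policy has actually seen at termination. In your version that link has to be supplied separately, as you acknowledge.
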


\begin{proof}
By admissibility, $\widehat M$ and $C_O$ are $\mathcal F_\tau$-measurable. Hence the tower property gives
\[
\E[R_{\widehat M}-C_O]
=
\E\!\left[
\E[R_{\widehat M}\mid \mathcal F_\tau]-C_O
\right].
\]
Because $\widehat M$ is $\mathcal F_\tau$-measurable,
\[
\E[R_{\widehat M}\mid \mathcal F_\tau]
=
\sum_{m=1}^M
\mathbf 1\{\widehat M=m\}
\E[R_m\mid \mathcal F_\tau].
\]
On the event $\{\widehat M=m\}$, the specialist $m$ has been opened, so $\mathcal F_\tau$ contains $Z_m$. The remaining terminal information consists of zero-cost information and signals from other opened specialists, together with decisions that are measurable functions of those observed signals.
Conditional independence implies that this additional information does not change the posterior mean of $R_m$ once $(\mathcal I_0,Z_m)$ is known. Therefore
\[
\E[R_m\mid \mathcal F_\tau]
=
\E[R_m\mid \mathcal I_0,Z_m]
=
G_m^\star
\]
on the event that $m$ has been opened. Substituting this into the previous equation yields
\[
\E[R_{\widehat M}\mid \mathcal F_\tau]
=
G_{\widehat M}^\star,
\]
and therefore
\[
\E[R_{\widehat M}-C_O]
=
\E[G_{\widehat M}^\star-C_O].
\]
\end{proof}

\begin{corollary}[Sealed value of an unopened specialist]
\label{cor:sealed-value}
Let $\mathcal H$ be any information state generated by an admissible policy before specialist $m$ has been opened. Define the value that would be revealed by opening $m$ at this information state as
\[
G_{m\mid \mathcal H}^\star
=
\E[R_m\mid \mathcal H,Z_m].
\]
If specialist $m$ is selected without being opened, then its decision value is
\[
\E[R_m\mid \mathcal H]
=
\E[G_{m\mid \mathcal H}^\star\mid \mathcal H].
\]
Thus an unopened specialist is evaluated by the posterior expectation of the value that would have been revealed by opening it.
\end{corollary}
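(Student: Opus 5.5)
The corollary is essentially the tower property, applied at the information state $\mathcal H$ instead of at the terminal state $\mathcal F_\tau$ used in \Cref{prop:opened-value-reduction}. The plan is to establish the sealed-value identity first, and then note that it plays the role of the opened-value reduction for an unopened specialist.

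The key step is the nesting $\sigma(\mathcal H)\subseteq \sigma(\mathcal H, Z_m)$. This holds because $\mathcal H$ is an admissible information state, so it is a sub-$\sigma$-algebra of the full probability space. Adjoining the random variable $Z_m$ can only refine it. By the tower property for nested conditional expectations,
\[
\E[R_m\mid \mathcal H]=\E\big[\E[R_m\mid \mathcal H, Z_m]\,\big|\,\mathcal H\big]=\E[G^\star_{m\mid\mathcal H}\mid \mathcal H],
\]
which is the claimed identity. We only need $R_m$ to be integrable, which is implicit in the definition of $\rho_m$ and $G_m^\star$.

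Next we justify why $\E[R_m\mid\mathcal H]$ is the right ``decision value'' when $m$ is selected sealed. We would mimic the proof of \Cref{prop:opened-value-reduction}. On the event $\{\widehat M=m\}$, where $m$ is selected unopened at a state with information $\mathcal H=\mathcal F_\tau$, the selection is $\mathcal F_\tau$-measurable. Hence $\E[R_{\widehat M}\mathbf 1\{\widehat M=m\}]=\E[\mathbf 1\{\widehat M=m\}\E[R_m\mid\mathcal F_\tau]]$, and the integrand is exactly the sealed value above.

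If we also want to simplify using the conditional independence assumption of \Cref{prop:opened-value-reduction}, we would note that $\mathcal H$ contains no $Z_m$. Then $\E[R_m\mid\mathcal H]=\E[R_m\mid\mathcal I_0]$ and $G^\star_{m\mid\mathcal H}=G^\star_m$, recovering $\E[G^\star_m\mid \mathcal I_0]$. This step is not required for the stated identity.

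The only subtle point is measurability. Because the policy is admissible, the event of stopping at $\mathcal H$ and selecting $m$ is itself $\mathcal H$-measurable, so conditioning on $\mathcal H$ is legitimate even though $\mathcal H$ is a random, policy-generated state. I would handle this by working on each event of the stopping time, or equivalently with the stopped $\sigma$-algebra $\mathcal F_\tau$, exactly as in the proposition.
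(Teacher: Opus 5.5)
Your proposal is correct and matches the paper's proof, which is exactly the one-line tower-property argument $\E[G^\star_{m\mid\mathcal H}\mid\mathcal H]=\E[\E[R_m\mid\mathcal H,Z_m]\mid\mathcal H]=\E[R_m\mid\mathcal H]$, resting on $\sigma(\mathcal H)\subseteq\sigma(\mathcal H,Z_m)$. Your extra remarks go beyond the paper's proof and are not needed for the stated identity, but they are consistent with the argument of \Cref{prop:opened-value-reduction}: the $\mathcal F_\tau$-measurability justification of why this is the right decision value, and the conditional-independence simplification to $\E[G^\star_m\mid\mathcal I_0]$.
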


\begin{proof}
This is the tower property:
\[
\E[G_{m\mid \mathcal H}^\star\mid \mathcal H]
=
\E\!\left[
\E[R_m\mid \mathcal H,Z_m]
\mid \mathcal H
\right]
=
\E[R_m\mid \mathcal H].
\]
\end{proof}

\Cref{prop:opened-value-reduction} explains why opened boxes can be searched using posterior decision values rather than realized rewards. \Cref{cor:sealed-value}
explains the corresponding role of unopened boxes in the non-obligatory inspection variant: an unopened specialist is a sealed outside option, whose value is the current posterior expectation of the value that opening would reveal. The backup-price construction in Pandora-NI is a way to compare this sealed option against the opened values of the other boxes.

The proposition and corollary are exact for the ideal posterior values. In practice, we do not observe $G_m^\star$ directly. Our costly estimator returns a learned score
\[
G_m = g_m(X,Z_m),
\]
which we use as a plug-in approximation to $G_m^\star$. If $g_m$ were trained by squared loss with unlimited data and sufficient model capacity, its population regression target would be the conditional mean in \Cref{eq:ideal-posterior-value}. With finite data and a restricted model class, $G_m$ is only an approximation. The Pandora reduction should therefore be read as exact for calibrated posterior decision values, and approximate when implemented with learned scores. The empirical question is whether the learned $G_m$ is accurate and calibrated enough to improve routing decisions for the
original reward objective. 
\section{Dataset Details}
\label{app:dataset-details}

\paragraph{MATH.}
For this evaluation, we compile a diverse corpus of mathematical problems spanning multiple levels of difficulty. This includes the standard Hendrycks MATH dataset~\citep{data_HendrycksMATH}, which provides roughly $12500$ problems across seven mathematical domains. We also incorporate the Omni-Math dataset~\citep{data_OmniMath}, an Olympiad-level benchmark designed to assess advanced reasoning capabilities. From this dataset, we utilize a subset of 1625 hard problems, filtered by problems which have between zero and 16 derivation steps. Furthermore, we include 969 American Invitational Mathematics Examination (AIME) problems~\citep{data_aime_competition, data_olympiadbench}, filtered to preclude any overlap with the MATH dataset, alongside 1419 problems from the Harvard-MIT Mathematics Tournament (HMMT)~\citep{data_hmmt_competition} (filtered to avoid overlap with Omni-Math). For the MATH dataset, we retain its standard 5000-problem test split. The remaining datasets are divided using a balanced 50/50 train-test split, resulting in a final aggregate corpus of 9504 training and 7008 testing examples. Prompts are shown in Appendix~\ref{sec:prompts}.\looseness=-1

\paragraph{RAG.} Our RAG evaluation setting builds on prior work that combines two retrieval specialists (Wikipedia and PubMed) along with a low-cost model without retrieval~\citep{eisenstein2025dont}. Following prior work, we train and evaluate on a collection of factoid questions from PopQA~\citep{mallen-etal-2023-trust}, Entity Questions~\citep{sciavolino-etal-2021-simple}, Natural Questions~\citep{kwiatkowski-etal-2019-natural}, and BioASQ~\citep{krithara2023bioasq}. We use 18661 questions for training, 1600 for test, and 400 for calibration, drawing an equal proportion from each of the four datasets. We use the same retrieval pipelines as \citet{eisenstein2025dont}, and generate responses using Gemini-3.1-Flash-Lite. Prompts are shown in Appendix~\ref{sec:prompts}. We use 4 few-shot examples per dataset (16 total) in order to prompt the LLM to generate answers in the right format. The responses are judged for correctness against the ground-truth answers given in each dataset. For each question, we sample $8$ responses and score each for correctness. The reward is the average correctness score, less a constant cost penalty of $0.05$ for systems that use retrieval (i.e., Wikipedia and Pubmed specialists). This cost value was chosen to make the routing problem non-trivial (i.e., the model without retrieval can compete when retrieval is not necessary). To avoid false negatives inherent in exact string matching, we use Gemini-3.1-Flash as an LLM-as-a-judge using the same annotation prompt as \citet{sun-etal-2024-head} (Appendix A.1, Prompt 2), which was measured to have very strong agreement with human judgements ($98\%$). 

\paragraph{EmbedLLM.} 
The EmbedLLM dataset is composed of questions from benchmarks such as GPQA~\citep{rein2023gpqa}, GSM8K~\citep{cobbe2021training}, and MMLU~\citep{hendrycks2020measuring}, along with the scores from 123 models~\citep{zhuang2025embedllm}. We use this dataset without modification. Following \citet{jitkrittum2025universal}, we set the model costs to be a linear multiple of the number of parameters. Specifically we apply a cost of 1 per trillion parameters, so that, e.g., Qwen-1.5-7B-Chat has a cost of $.00772$ and Qwen-1.5-32B-chat has a cost of $.0325$. Seven models were excluded because the parameter count could not be determined (e.g., Claude). We use 16756 prompts for training, 2436 for test, and 609 for calibration.
\section{Implementation details}
\label{sec:implementation_details}

\subsection{Value estimators}

We give additional details for the value estimators $f_m$ and $g_m$ described in \Cref{sec:experimental-domains}.

\paragraph{Embedding-based estimation (KNN).} The cheap value estimator computes an embedding $e(x)$ of the prompt and retrieves the $k$ nearest neighbors $\mathcal{N}_k(x)$ from a calibration set, returning:
\begin{equation}
f_m^{\text{KNN}}(x) = \frac{1}{k}\sum_{x' \in \mathcal{N}_k(x)} R_m(x'),
\end{equation}
where proximity is measured by cosine similarity in the embedding space. This estimator is fast but limited to patterns that are visible in the embedding space. We use Gemini Embedding 2~\citep{lee2025gemini} as our embedder, and use $k = 3$ for the KNN computation.\looseness=-1

\paragraph{Fine-tuned estimation (SFT).} A small language model $h_\theta$ is fine-tuned with a regression loss to predict the cost-adjusted reward from an input context $z_m(x)$:
\begin{equation}
\min_\theta \sum_{(x, m)} \left(h_\theta(z_m(x)) - R_m(x)\right)^2.
\end{equation}
The context $z_m(x)$ can be the prompt alone (SFT-prompt), the prompt with retrieval results (SFT-retrievals), or the prompt with partial reasoning traces (SFT-CoT-$k$). We use Gemini-2.5-Flash-Lite as the base model for fine-tuning. Targets are real-valued. Prompts are shown in Appendix~\ref{sec:prompts}.\looseness=-1

\subsection{Prompts}
\label{sec:prompts}

\begin{Graybox}{Math task: prompt with reasoning.}
\footnotesize
Solve the following math problem. Show your work step-by-step, explaining your reasoning clearly. After your reasoning, provide the final answer enclosed in \textbackslash boxed\{\} tags. \\
\\
Problem: \\
\textcolor{blue}{\{problem\}} \\
\\
Step-by-step solution: \\
$\langle$reasoning here$\rangle$ \\
\\
The final answer is \textbackslash boxed\{answer\}.
\end{Graybox}

\begin{Graybox}{Math task: SFT value estimator prompt using CoT.}
\footnotesize
Question: \textcolor{blue}{\{question\}} \\
Chain-of-Thought: \textcolor{blue}{\{CoT text\}} \\
Specialist: \textcolor{blue}{\{specialist\}} \\
Based on the question and the chain of thought, will the specialist answer correctly? (Predict 1.0 for yes, 0.0 for no)
\end{Graybox}

\begin{Graybox}{RAG task: QA prompt with retrievals.}
\footnotesize
You are a helpful agent whose job is to answer a question. Your answers should be short. For example, if the question is ``What is the capital of France?'', please answer ``Paris'', and not ``Paris is the capital of France''. If you are asked a yes/no question, you may only answer ``yes'' or ``no''. \\
\\
To help you answer the question correctly, you will be given verified information from \textcolor{blue}{\{corpus\}}. The verified information may not be necessary, and you can directly answer the question if you are confident that you have the correct answer. The verified information may also not be relevant or sufficient to answer the question. You should still always respond with your best guess. \\
\\
\textcolor{gray}{\{few\_shot\_examples\}} \\
\\
QUESTION: \textcolor{blue}{\{question\}} \\
\textcolor{brown}{\{retrievals\}} \\
QUESTION: \textcolor{blue}{\{question\}} \\
ANSWER:
\end{Graybox}

\begin{Graybox}{RAG task: standard prompt without retrievals.}
\footnotesize
You are a helpful agent whose job is to answer a question. Your answers should be short. For example, if the question is ``What is the capital of France?'', please answer ``Paris'', and not ``Paris is the capital of France''. If you are asked a yes/no question, you may only answer ``yes'' or ``no''. \\
\\
\textcolor{gray}{\{few\_shot\_examples\}} \\
\\
QUESTION: \textcolor{blue}{\{question\}} \\
ANSWER:
\end{Graybox}
\begin{Graybox}{RAG task: SFT value estimator prompt using retrievals.}
\footnotesize
INSTRUCTIONS: Your task is to predict how likely it is that a language model with \textcolor{blue}{\{capability\}} correctly answers the following question. \\
\\
QUESTION: \textcolor{blue}{\{question\}} \\
\\
INFORMATION: Here are the retrieved passages that the language model will have access to: \\
\\
\textcolor{brown}{1. Title: \{title\_1\} \\
Passage: \{passage\_1\}} \\
\textcolor{brown}{2. Title: \{title\_2\} \\
Passage: \{passage\_2\}} \\
\textcolor{brown}{\ldots} \\
\\
QUESTION: \textcolor{blue}{\{question\}} \\
\\
RESPONSE:
\end{Graybox}

\begin{Graybox}{RAG task: SFT value estimator prompt without retrievals.}
\footnotesize
INSTRUCTIONS: Your task is to predict how likely it is that a language model with \textcolor{blue}{\{capability\}} correctly answers the following question. \\
\\
QUESTION: \textcolor{blue}{\{question\}} \\
\\
RESPONSE:
\end{Graybox}

\begin{Graybox}{EmbedLLM: SFT value estimator prompt (no additional info).}
\footnotesize
INSTRUCTIONS: Your task is to predict how likely it is that the model \textcolor{blue}{\{actor\}} gives a satisfactory response to the following prompt. \\
\\
PROMPT: \textcolor{blue}{\{prompt\}} \\
\\
How likely is it?
\end{Graybox}

\subsection{Experiment compute resources}
Much of our work is based on pre-computed outputs from existing models (e.g., the EmbedLLM data). The main exception is in value estimation, for which we finetuned small Transformer-based encoders ourselves, and inference for the RAG and math domains. Specifically, in the evaluation on the math domain, where we made 15k calls to the Gemini-2.5-Flash-Lite and Gemma3-4B models to compute answers, where the models are served on a TPU cluster with 32 TPUs. This process took less than 1 hour. For the RAG domain, we made  $\sim$500k calls to Gemini-2.5-Flash-Lite to generate 8 responses per question per retrieval setting, and then another $\sim$500k calls to Gemini-2.5-Flash to score all the responses for correctness.  We also fine-tuned small Transformers as value estimators for each setting, requiring 1-3 hours of training time. The training time was higher for the math domain, needing up to 6 hours, while running on 64 Google TPUs utilizing up to 436.69 GiB of Memory. The Pandora's box algorithm itself was run on CPU at minimal cost.
\section{Supplemental Results}

\subsection{Number of queries}
\label{sec:supplement-num-queries}
\Cref{fig:num-queries} shows the number of queries executed by each algorithm. Note that Pandora-OI, the obligatory-inspection variant, must always execute at least one query. At higher costs, most of the savings for Pandora-NI derives from identifying prompts for which it is not necessary to query $g$.

\begin{figure}
\begin{tikzpicture}
\begin{groupplot}[
  group style={
    group size=3 by 1,
    horizontal sep=1cm,
    ylabels at=edge left,
    xlabels at=edge bottom,
  },
  xmode=log,
  xlabel={Cost},
  grid=major,
  width=.35\linewidth,
  height=.3\linewidth,
  label style={font=\scriptsize},
  tick label style={font=\scriptsize},
  title style={font=\scriptsize},
  legend style={font=\scriptsize},
  scaled y ticks=false,
  yticklabel style={/pgf/number format/fixed, /pgf/number format/precision=3},
]
\nextgroupplot[title={MATH}, ylabel={Num Queries}, legend to name=shared_legend, legend columns=5]
\addplot[
  color={rgb,255:red,44;green,160;blue,44},
  mark=triangle,
  mark size=1.5,
  thick,
] coordinates {
  (0.001, 0)
  (0.001778, 0)
  (0.003162, 0)
  (0.005623, 0)
  (0.01, 0)
  (0.01778, 0)
  (0.03162, 0)
  (0.05623, 0)
  (0.1, 0)
};
\addlegendentry{$f$-only}
\addplot[
  color={rgb,255:red,214;green,39;blue,40},
  mark=square,
  mark size=1.5,
  thick,
] coordinates {
  (0.001, 2)
  (0.001778, 2)
  (0.003162, 2)
  (0.005623, 2)
  (0.01, 2)
  (0.01778, 2)
  (0.03162, 2)
  (0.05623, 2)
  (0.1, 2)
};
\addlegendentry{$g$-always}
\addplot[
  color={rgb,255:red,31;green,119;blue,180},
  mark=o,
  mark size=1.5,
  thick,
] coordinates {
  (0.001, 1.28958)
  (0.001778, 0.909167)
  (0.003162, 0.885417)
  (0.005623, 0.797083)
  (0.01, 0.701667)
  (0.01778, 0.434583)
  (0.03162, 0.1825)
  (0.05623, 0.0541667)
  (0.1, 0)
};
\addlegendentry{Pandora NI}
\addplot[
  color={rgb,255:red,255;green,127;blue,14},
  mark=*,
  mark size=1.5,
  thick,
] coordinates {
  (0.001, 1.31542)
  (0.001778, 1.29417)
  (0.003162, 1.27333)
  (0.005623, 1.24542)
  (0.01, 1.17917)
  (0.01778, 1.12)
  (0.03162, 1.08792)
  (0.05623, 1.02917)
  (0.1, 1.01792)
};
\addlegendentry{Pandora OI}
\nextgroupplot[title={RAG}]
\addplot[
  color={rgb,255:red,44;green,160;blue,44},
  mark=triangle,
  mark size=1.5,
  thick,
] coordinates {
  (0.001, 0)
  (0.001778, 0)
  (0.003162, 0)
  (0.005623, 0)
  (0.01, 0)
  (0.01778, 0)
  (0.03162, 0)
  (0.05623, 0)
  (0.1, 0)
};
\addplot[
  color={rgb,255:red,214;green,39;blue,40},
  mark=square,
  mark size=1.5,
  thick,
] coordinates {
  (0.001, 3)
  (0.001778, 3)
  (0.003162, 3)
  (0.005623, 3)
  (0.01, 3)
  (0.01778, 3)
  (0.03162, 3)
  (0.05623, 3)
  (0.1, 3)
};
\addplot[
  color={rgb,255:red,31;green,119;blue,180},
  mark=o,
  mark size=1.5,
  thick,
] coordinates {
  (0.001, 2.26063)
  (0.001778, 2.17125)
  (0.003162, 2.05375)
  (0.005623, 1.9125)
  (0.01, 1.68)
  (0.01778, 1.33563)
  (0.03162, 0.679375)
  (0.05623, 0.02625)
  (0.1, 0)
};
\addplot[
  color={rgb,255:red,255;green,127;blue,14},
  mark=*,
  mark size=1.5,
  thick,
] coordinates {
  (0.001, 2.28)
  (0.001778, 2.20875)
  (0.003162, 2.12312)
  (0.005623, 2.01812)
  (0.01, 1.90063)
  (0.01778, 1.69125)
  (0.03162, 1.54062)
  (0.05623, 1.40437)
  (0.1, 1.22812)
};
\nextgroupplot[title={EmbedLLM},ymode=log]
\addplot[
  color={rgb,255:red,44;green,160;blue,44},
  mark=triangle,
  mark size=1.5,
  thick,
] coordinates {
  (1e-05, 0)
  (3.16228e-05, 0)
  (0.0001, 0)
  (0.000316228, 0)
  (0.001, 0)
  (0.00316228, 0)
  (0.01, 0)
  (0.0316228, 0)
  (0.1, 0)
};
\addplot[
  color={rgb,255:red,214;green,39;blue,40},
  mark=square,
  mark size=1.5,
  thick,
] coordinates {
  (1e-05, 110)
  (3.16228e-05, 110)
  (0.0001, 110)
  (0.000316228, 110)
  (0.001, 110)
  (0.00316228, 110)
  (0.01, 110)
  (0.0316228, 110)
  (0.1, 110)
};
\addplot[
  color={rgb,255:red,31;green,119;blue,180},
  mark=o,
  mark size=1.5,
  thick,
] coordinates {
  (1e-05, 10.8892)
  (3.16228e-05, 9.27375)
  (0.0001, 7.65333)
  (0.000316228, 5.9375)
  (0.001, 4.39167)
  (0.00316228, 2.90667)
  (0.01, 1.18958)
  (0.0316228, 0.254583)
  (0.1, 0)
};
\addplot[
  color={rgb,255:red,255;green,127;blue,14},
  mark=*,
  mark size=1.5,
  thick,
] coordinates {
  (1e-05, 11.3875)
  (3.16228e-05, 9.7525)
  (0.0001, 7.92875)
  (0.000316228, 6.1125)
  (0.001, 4.51667)
  (0.00316228, 3.14125)
  (0.01, 2.1025)
  (0.0316228, 1.28333)
  (0.1, 1.055)
};
\end{groupplot}
\node[below=0.8cm] at ($(group c1r1.south)!0.5!(group c3r1.south)$) {\ref{shared_legend}};
\end{tikzpicture}
\caption{Number of queries to $g$ during model routing.}
\label{fig:num-queries}
\end{figure}

\subsection{Auctions against weak opponents}
\label{sec:supplement-weak-opponent}

In \Cref{sec:pandora-bidder}, the leave-one-out auction computes a posted price from the costly value estimates $g_{j\neq m}$. Since the optimal value-of-information-based strategy for the strategic specialist $m$ depends on the price it faces (i.e., if it is in the interval $[p_\text{lo}, p_{\text{hi}}]$), changing the posted price will also affect the total allocative efficiency and the specialist's surplus.  In particular, we will see that when the price does not accurately represent the true \emph{value} of the best competing specialist, the strategic specialist can exploit this the maximize its own profit at the expense of the overall allocative efficiency of the joint system. \Cref{fig:bidding-results-weak-opponents} shows what happens if the posted prices are computed from the weak estimator $f_m$ instead. In both the RAG and EmbedLLM settings, efficiency regret actually increases, even while individual surplus regret improves at most cost levels. This indicates that the leave-one-out bidder is able to improve its individual utility at the expense of overall welfare, by exploiting poor bids from the other participants. This can happen both when the posted price is too low (the specialist will accept the price, even though a competitor might be better), or when the posted price is too high (the specialist will decline the price, even though it is better than all competitors).

\begin{figure}
\begin{tikzpicture}
\begin{groupplot}[
  group style={
    group size=3 by 2,
    horizontal sep=1cm,
    vertical sep=0.8cm,
    ylabels at=edge left,
    xlabels at=edge bottom,
  },
  xmode=log,
  xlabel={$c_g$},
  grid=major,
  width=.35\linewidth,
  height=.3\linewidth,
  label style={font=\scriptsize},
  tick label style={font=\scriptsize},
  title style={font=\scriptsize},
  legend style={font=\scriptsize},
  scaled y ticks=false,
  yticklabel style={/pgf/number format/fixed, /pgf/number format/precision=3},
]
\nextgroupplot[title={MATH}, ylabel={Surplus Regret}, ymin=0.0429424, ymax=0.0679651, legend to name=shared_legend, legend columns=3]
\addplot[
  color={rgb,255:red,44;green,160;blue,44},
  mark=triangle,
  mark size=1.5,
  thick,
] coordinates {
  (0.00316228, 0.0606055)
  (0.00681292, 0.0606055)
  (0.014678, 0.0606055)
  (0.0316228, 0.0606055)
  (0.0681292, 0.0606055)
  (0.14678, 0.0606055)
  (0.316228, 0.0606055)
};
\addlegendentry{$f$-only}
\addplot[
  color={rgb,255:red,214;green,39;blue,40},
  mark=square,
  mark size=1.5,
  thick,
] coordinates {
  (0.00316228, 0.0458862)
  (0.00681292, 0.0495369)
  (0.014678, 0.0574019)
  (0.0316228, 0.0743467)
  (0.0681292, 0.110853)
  (0.14678, 0.189504)
  (0.316228, 0.358952)
};
\addlegendentry{$g$-always}
\addplot[
  color={rgb,255:red,31;green,119;blue,180},
  mark=o,
  mark size=1.5,
  thick,
] coordinates {
  (0.00316228, 0.0465467)
  (0.00681292, 0.0520571)
  (0.014678, 0.0538013)
  (0.0316228, 0.0569666)
  (0.0681292, 0.0594023)
  (0.14678, 0.0606055)
  (0.316228, 0.0606055)
};
\addlegendentry{Pandora's Bidder}
\nextgroupplot[title={RAG}, ymin=0.052458, ymax=0.183715]
\addplot[
  color={rgb,255:red,44;green,160;blue,44},
  mark=triangle,
  mark size=1.5,
  thick,
] coordinates {
  (0.00316228, 0.14511)
  (0.00681292, 0.14511)
  (0.014678, 0.14511)
  (0.0316228, 0.14511)
  (0.0681292, 0.14511)
  (0.14678, 0.14511)
  (0.316228, 0.14511)
};
\addplot[
  color={rgb,255:red,214;green,39;blue,40},
  mark=square,
  mark size=1.5,
  thick,
] coordinates {
  (0.00316228, 0.0679)
  (0.00681292, 0.0715506)
  (0.014678, 0.0794157)
  (0.0316228, 0.0963605)
  (0.0681292, 0.132867)
  (0.14678, 0.211518)
  (0.316228, 0.380965)
};
\addplot[
  color={rgb,255:red,31;green,119;blue,180},
  mark=o,
  mark size=1.5,
  thick,
] coordinates {
  (0.00316228, 0.0681504)
  (0.00681292, 0.0733534)
  (0.014678, 0.0834884)
  (0.0316228, 0.103149)
  (0.0681292, 0.134762)
  (0.14678, 0.14511)
  (0.316228, 0.14511)
};
\nextgroupplot[title={EmbedLLM}, ymin=0.0877495, ymax=0.100709]
\addplot[
  color={rgb,255:red,44;green,160;blue,44},
  mark=triangle,
  mark size=1.5,
  thick,
] coordinates {
  (1e-05, 0.0968976)
  (0.0001, 0.0968976)
  (0.0003, 0.0968976)
  (0.001, 0.0968976)
  (0.003, 0.0968976)
  (0.01, 0.0968976)
  (0.03, 0.0968976)
  (0.1, 0.0968976)
  (0.3, 0.0968976)
};
\addplot[
  color={rgb,255:red,214;green,39;blue,40},
  mark=square,
  mark size=1.5,
  thick,
] coordinates {
  (1e-05, 0.0892742)
  (0.0001, 0.0893642)
  (0.0003, 0.0895642)
  (0.001, 0.0902642)
  (0.003, 0.0922642)
  (0.01, 0.0992642)
  (0.03, 0.119264)
  (0.1, 0.189264)
  (0.3, 0.389264)
};
\addplot[
  color={rgb,255:red,31;green,119;blue,180},
  mark=o,
  mark size=1.5,
  thick,
] coordinates {
  (1e-05, 0.0892791)
  (0.0001, 0.089381)
  (0.0003, 0.089564)
  (0.001, 0.0900978)
  (0.003, 0.0916965)
  (0.01, 0.0957204)
  (0.03, 0.101644)
  (0.1, 0.0970106)
  (0.3, 0.0968976)
};
\nextgroupplot[ylabel={Efficiency Regret}, ymin=0.0199778, ymax=0.0334951]
\addplot[
  color={rgb,255:red,44;green,160;blue,44},
  mark=triangle,
  mark size=1.5,
  thick,
] coordinates {
  (0.00316228, 0.0295194)
  (0.00681292, 0.0295194)
  (0.014678, 0.0295194)
  (0.0316228, 0.0295194)
  (0.0681292, 0.0295194)
  (0.14678, 0.0295194)
  (0.316228, 0.0295194)
};
\addplot[
  color={rgb,255:red,214;green,39;blue,40},
  mark=square,
  mark size=1.5,
  thick,
] coordinates {
  (0.00316228, 0.021568)
  (0.00681292, 0.0252187)
  (0.014678, 0.0330837)
  (0.0316228, 0.0500285)
  (0.0681292, 0.086535)
  (0.14678, 0.165186)
  (0.316228, 0.334634)
};
\addplot[
  color={rgb,255:red,31;green,119;blue,180},
  mark=o,
  mark size=1.5,
  thick,
] coordinates {
  (0.00316228, 0.0195528)
  (0.00681292, 0.0239116)
  (0.014678, 0.0244236)
  (0.0316228, 0.0256708)
  (0.0681292, 0.028604)
  (0.14678, 0.0295194)
  (0.316228, 0.0295194)
};
\nextgroupplot[ymin=0.0512664, ymax=0.0719147]
\addplot[
  color={rgb,255:red,44;green,160;blue,44},
  mark=triangle,
  mark size=1.5,
  thick,
] coordinates {
  (0.00316228, 0.0658417)
  (0.00681292, 0.0658417)
  (0.014678, 0.0658417)
  (0.0316228, 0.0658417)
  (0.0681292, 0.0658417)
  (0.14678, 0.0658417)
  (0.316228, 0.0658417)
};
\addplot[
  color={rgb,255:red,214;green,39;blue,40},
  mark=square,
  mark size=1.5,
  thick,
] coordinates {
  (0.00316228, 0.0536956)
  (0.00681292, 0.0573463)
  (0.014678, 0.0652113)
  (0.0316228, 0.0821561)
  (0.0681292, 0.118663)
  (0.14678, 0.197313)
  (0.316228, 0.366761)
};
\addplot[
  color={rgb,255:red,31;green,119;blue,180},
  mark=o,
  mark size=1.5,
  thick,
] coordinates {
  (0.00316228, 0.0532318)
  (0.00681292, 0.0561446)
  (0.014678, 0.061764)
  (0.0316228, 0.0700947)
  (0.0681292, 0.0711395)
  (0.14678, 0.0658417)
  (0.316228, 0.0658417)
};
\nextgroupplot[ymin=0.04, ymax=0.12]
\addplot[
  color={rgb,255:red,44;green,160;blue,44},
  mark=triangle,
  mark size=1.5,
  thick,
] coordinates {
  (1e-05, 0.0672329)
  (0.0001, 0.0672329)
  (0.0003, 0.0672329)
  (0.001, 0.0672329)
  (0.003, 0.0672329)
  (0.01, 0.0672329)
  (0.03, 0.0672329)
  (0.1, 0.0672329)
  (0.3, 0.0672329)
};
\addplot[
  color={rgb,255:red,214;green,39;blue,40},
  mark=square,
  mark size=1.5,
  thick,
] coordinates {
  (1e-05, 0.0750623)
  (0.0001, 0.0751523)
  (0.0003, 0.0753523)
  (0.001, 0.0760523)
  (0.003, 0.0780523)
  (0.01, 0.0850523)
  (0.03, 0.105052)
  (0.1, 0.175052)
  (0.3, 0.375052)
};
\addplot[
  color={rgb,255:red,31;green,119;blue,180},
  mark=o,
  mark size=1.5,
  thick,
] coordinates {
  (1e-05, 0.0750614)
  (0.0001, 0.0751256)
  (0.0003, 0.0752598)
  (0.001, 0.0756322)
  (0.003, 0.0767467)
  (0.01, 0.0794477)
  (0.03, 0.0815337)
  (0.1, 0.0674838)
  (0.3, 0.0672329)
};
\end{groupplot}
\node[below=1cm] at ($(group c1r2.south)!0.5!(group c3r2.south)$) {\ref{shared_legend}};
\end{tikzpicture}
\caption{Specialist surplus regret and total allocative efficiency regret of the posted-price auction, when bidding against opponents who have access only to $f$, rather than $g$ as in \Cref{fig:bidding-results}.
}
\label{fig:bidding-results-weak-opponents}
\end{figure}

\subsection{Monetary costs for value functions}
\label{sec:supp-monetary-cost-derivations}
As noted in \Cref{sec:setup-value-costs}, we estimate monetary costs from prices listed at \url{https://ai.google.dev/gemini-api/docs/pricing} (retrieved August 1, 2026). In all evaluations, the $f$ value estimator is a small MLP applied to a question embedding. The MLP itself is very cheap to run, and can be assumed to be cost-free. The embedding costs \$0.20 per million tokens, and this can be amortized across target models, although to be conservative about $c_g/c_f$, we do not account for this amortization here. The $g$ value estimators are priced as follows:
\begin{itemize}[leftmargin=*, itemsep=5pt]
    \item \textbf{SFT-CoT-20}, used in the MATH domain, requires running the target model LLM reasoning process for 20 tokens. The cheapest frontier LLM (Gemini 3.5 flash-lite) costs \$2.50 per million output tokens, and \$0.25 per million input tokens. In the MATH domain, there are approximately 43 tokens per question. This gives $c_f \approx 8.6 \times 10^{-6}$ and $c_g \approx 2.5 \times 20 \times 10^{-6} = 5 \times 10^{-5}$, yielding $c_g/c_f \approx 5.8.$\looseness=-1
    \item \textbf{SFT-retrievals}, used in the RAG domain, requires running retrieval, which costs $1.4\times 10^{-2}$ per model evaluation. At $10.1$ tokens per question in the RAG domain, we have $c_f \approx 10.1 \times .2 \times 10^{-6} \approx 2 \times 10^{-6}.$ We conservatively estimate $c_g/c_f >7000.$
    \item \textbf{SFT-prompt}, used in EmbedLLM, requires LLM inference to generate a single token. As above, this costs $2.5 \times 10^{-6}$ per output token and $2.5 \times 10^{-7}$ per input token; the $f$ estimator costs $2\times 10^{-7}$ per input token. At $126$ tokens per query, this yields a ratio of $c_g/c_f \approx 1.6.$ However, this does not account for amortization of the embedding cost $c_f$ across target models, which would be most significant in this domain because it has the largest number of models.
\end{itemize}

\begin{figure}
\centering
\begin{tikzpicture}
\begin{groupplot}[
  group style={
    group size=3 by 1,
    horizontal sep=1cm,
    ylabels at=edge left,
    xlabels at=edge bottom,
  },
  xmode=log,
  grid=major,
  width=.35\linewidth,
  height=.3\linewidth,
  label style={font=\scriptsize},
  tick label style={font=\scriptsize},
  title style={font=\scriptsize},
  legend style={font=\scriptsize},
  scaled y ticks=false,
  yticklabel style={/pgf/number format/fixed, /pgf/number format/precision=3},
]

\nextgroupplot[
  title={MATH},
  xlabel={Inspection cost (\$/1M queries)},
  ylabel={Routing Regret},
  legend to name=monetary_legend,
  legend columns=4,
]
\addplot[
  color={rgb,255:red,44;green,160;blue,44},
  mark=triangle,
  mark size=2.5,
  thick,
  only marks,
] coordinates {
  (17, 0.117)
};
\addlegendentry{$f$-only}
\addplot[
  color={rgb,255:red,214;green,39;blue,40},
  mark=square,
  mark size=2.5,
  thick,
  only marks,
] coordinates {
  (100, 0.090)
};
\addlegendentry{$g$-always}
\addplot[
  color={rgb,255:red,31;green,119;blue,180},
  mark=o,
  mark size=1.5,
  thick,
] coordinates {
  (17, 0.1170)
  (20, 0.1110)
  (26, 0.1062)
  (39, 0.1033)
  (52, 0.0990)
  (57, 0.0965)
  (61, 0.0952)
  (63, 0.0944)
  (82, 0.0927)
};
\addlegendentry{Pandora's Router}
\addplot[
  color={rgb,255:red,148;green,103;blue,189},
  mark=diamond,
  mark size=1.5,
  thick,
] coordinates {
  (17, 0.1170)
  (20, 0.1110)
  (26, 0.1062)
  (39, 0.1033)
  (52, 0.0970)
  (57, 0.0955)
  (61, 0.0952)
  (63, 0.0944)
  (82, 0.0907)
};
\addlegendentry{Margin-$N_{pr}$}

\nextgroupplot[
  title={RAG},
  xlabel={Inspection cost (\$/1K queries)},
  xmin=0.005,
]
\addplot[
  color={rgb,255:red,44;green,160;blue,44},
  mark=triangle,
  mark size=2.5,
  thick,
  only marks,
] coordinates {
  (0.006, 0.150)
};
\addplot[
  color={rgb,255:red,214;green,39;blue,40},
  mark=square,
  mark size=2.5,
  thick,
  only marks,
] coordinates {
  (42, 0.084)
};
\addplot[
  color={rgb,255:red,31;green,119;blue,180},
  mark=o,
  mark size=1.5,
  thick,
] coordinates {
  (0.006, 0.1520)
  (0.37, 0.1515)
  (9.52, 0.1225)
  (18.70, 0.1023)
  (23.53, 0.0932)
  (26.78, 0.0892)
  (28.76, 0.0865)
  (30.40, 0.0861)
  (31.65, 0.0857)
};
\addplot[
  color={rgb,255:red,148;green,103;blue,189},
  mark=diamond,
  mark size=1.5,
  thick,
] coordinates {
  (0.006, 0.1500)
  (0.37, 0.1495)
  (9.52, 0.1325)
  (18.70, 0.1133)
  (23.53, 0.1082)
  (26.78, 0.1032)
  (28.76, 0.1035)
  (30.40, 0.0991)
  (31.65, 0.0947)
};

\nextgroupplot[
  title={EmbedLLM},
  xlabel={Inspection cost (\$/1K queries)},
]
\addplot[
  color={rgb,255:red,44;green,160;blue,44},
  mark=triangle,
  mark size=2.5,
  thick,
  only marks,
] coordinates {
  (2.77, 0.393)
};
\addplot[
  color={rgb,255:red,214;green,39;blue,40},
  mark=square,
  mark size=2.5,
  thick,
  only marks,
] coordinates {
  (3.74, 0.370)
};
\addplot[
  color={rgb,255:red,31;green,119;blue,180},
  mark=o,
  mark size=1.5,
  thick,
] coordinates {
  (2.77, 0.3920)
  (2.78, 0.3949)
  (2.81, 0.3851)
  (2.87, 0.3798)
  (2.92, 0.3776)
  (2.97, 0.3741)
  (3.03, 0.3712)
  (3.09, 0.3737)
  (3.14, 0.3719)
};
\addplot[
  color={rgb,255:red,148;green,103;blue,189},
  mark=diamond,
  mark size=1.5,
  thick,
] coordinates {
  (2.77, 0.3930)
  (2.78, 0.3929)
  (2.81, 0.3911)
  (2.87, 0.3888)
  (2.92, 0.3826)
  (2.97, 0.3761)
  (3.03, 0.3762)
  (3.09, 0.3777)
  (3.14, 0.3769)
};

\end{groupplot}
\node[below=1cm] at ($(group c1r1.south)!0.5!(group c3r1.south)$) {\ref{monetary_legend}};
\end{tikzpicture}
\caption{Routing performance versus monetary inspection cost, as estimated in \Cref{sec:supp-monetary-cost-derivations}.}
\label{fig:monetary-cost-vs-regret}
\end{figure}

\subsection{Numerical results and significance tests}
\label{sec:supplemental-numerical-significance}
Numerical results and significance tests for the routing evaluations are shown in Tables~\ref{tab:math-routing-significance}, \ref{tab:rag-routing-significance}, and \ref{tab:embedllm-routing-significance}. In each table, bold indicates the lowest regret + inspection cost, asterisk indicates a statistically significant improvement over all alternatives at $p<.05$ by a paired bootstrap test. These results present a consistent picture: Pandora's router is in the argmin of regret at nearly every cost level. At low costs, it matches $g$-always; at high costs it matches $f$-only; at intermediate costs, it often offers the best regret, although this difference is usually not statistically significant with respect to all alternatives.\looseness=-1

\begin{table}
\centering
\footnotesize
\begin{tabular}{llllllll}
\toprule
Cost & f-only & g-only & Top-$2$ & Coin Flip & Random-$N_{\mathrm{pr}}$ & Margin-$N_{\mathrm{pr}}$ & Pandora's Router \\
\midrule
1.5e-03 & 0.117 & 0.093 & 0.093 & 0.109 & 0.194 & \textbf{0.093} & 0.095 \\
2.1e-03 & 0.117 & \textbf{0.094} & 0.094 & 0.109 & 0.193 & 0.097 & 0.096 \\
3.2e-03 & 0.117 & \textbf{0.096} & 0.096 & 0.110 & 0.198 & 0.098 & 0.098 \\
4.6e-03 & 0.117 & \textbf{0.099} & 0.099 & 0.112 & 0.197 & 0.099 & 0.099 \\
6.8e-03 & 0.117 & 0.104 & 0.104 & 0.114 & 0.193 & 0.101 & \textbf{0.101} \\
1.0e-02 & 0.117 & 0.110 & 0.110 & 0.117 & 0.199 & \textbf{0.104}* & 0.107 \\
1.5e-02 & 0.117 & 0.119 & 0.119 & 0.122 & 0.175 & \textbf{0.108}* & 0.113 \\
2.2e-02 & 0.117 & 0.133 & 0.133 & 0.129 & 0.177 & 0.113 & \textbf{0.111} \\
3.2e-02 & 0.117 & 0.153 & 0.153 & 0.139 & 0.147 & 0.112 & \textbf{0.111} \\
4.6e-02 & 0.117 & 0.183 & 0.183 & 0.154 & 0.130 & \textbf{0.113} & 0.113 \\
6.8e-02 & 0.117 & 0.226 & 0.226 & 0.176 & 0.117 & 0.117 & \textbf{0.109}* \\
\bottomrule\\
\end{tabular}
\caption{Numerical results on MATH routing. Bold indicates lowest regret + inspection cost, asterisk indicates significance at $p<.05$ (paired bootstrap).}
\label{tab:math-routing-significance}
\end{table}

\begin{table}
\centering
\footnotesize
\begin{tabular}{llllllll}
\toprule
Cost & f-only & g-only & Top-$2$ & Coin Flip & Random-$N_{\mathrm{pr}}$ & Margin-$N_{\mathrm{pr}}$ & Pandora's Router \\
\midrule
1.5e-03 & 0.150 & \textbf{0.088} & 0.110 & 0.124 & 0.129 & 0.100 & 0.090 \\
2.1e-03 & 0.150 & 0.090 & 0.112 & 0.125 & 0.134 & 0.103 & 0.090 \\
3.2e-03 & 0.150 & 0.093 & 0.114 & 0.127 & 0.146 & 0.110 & 0.092 \\
4.6e-03 & 0.150 & 0.098 & 0.117 & 0.129 & 0.155 & 0.113 & 0.098 \\
6.8e-03 & 0.150 & 0.104 & 0.121 & 0.132 & 0.155 & 0.119 & 0.104 \\
1.0e-02 & 0.150 & 0.114 & 0.127 & 0.137 & 0.170 & 0.124 & 0.110 \\
1.5e-02 & 0.150 & 0.128 & 0.137 & 0.144 & 0.182 & 0.134 & 0.122 \\
2.2e-02 & 0.150 & 0.148 & 0.150 & 0.155 & 0.198 & 0.143 & 0.136 \\
3.2e-02 & 0.150 & 0.178 & 0.171 & 0.170 & 0.196 & 0.154 & \textbf{0.144} \\
4.6e-02 & \textbf{0.150}* & 0.223 & 0.200 & 0.192 & 0.167 & 0.155 & 0.157 \\
6.8e-02 & \textbf{0.150} & 0.288 & 0.244 & 0.225 & 0.150 & 0.150 & 0.155 \\
\bottomrule\\
\end{tabular}
\caption{Numerical results on RAG routing. Bold indicates lowest regret + inspection cost, asterisk indicates significance at $p<.05$ (paired bootstrap).}
\label{tab:rag-routing-significance}
\end{table}

\begin{table}
\centering
\footnotesize
\begin{tabular}{llllllll}
\toprule
Cost & f-only & g-only & Top-$2$ & Coin Flip & Random-$N_{\mathrm{pr}}$ & Margin-$N_{\mathrm{pr}}$ & Pandora's Router \\
\midrule
1.0e-05 & 0.393 & 0.371 & 0.402 & 0.398 & 0.412 & 0.377 & 0.372 \\
1.0e-04 & 0.393 & 0.381 & 0.402 & 0.403 & 0.421 & 0.377 & 0.372 \\
3.0e-04 & 0.393 & 0.403 & 0.403 & 0.414 & 0.436 & 0.378 & 0.374 \\
1.0e-03 & 0.393 & 0.480 & 0.404 & 0.453 & 0.454 & 0.387 & 0.382 \\
3.0e-03 & 0.393 & 0.700 & 0.408 & 0.563 & 0.469 & 0.398 & 0.390 \\
1.0e-02 & \textbf{0.393} & 1.470 & 0.422 & 0.947 & 0.484 & 0.403 & 0.399 \\
3.0e-02 & \textbf{0.393}* & 3.670 & 0.462 & 2.047 & 0.439 & 0.401 & 0.405 \\
1.0e-01 & \textbf{0.393} & 11.370 & 0.602 & 5.894 & 0.393 & 0.393 & 0.395 \\
3.0e-01 & 0.393 & 33.370 & 1.002 & 16.887 & 0.393 & 0.393 & \textbf{0.391} \\
\bottomrule\\
\end{tabular}
\caption{Numerical results on EmbedLLM routing. Bold indicates lowest regret + inspection cost, asterisk indicates significance at $p<.05$ (paired bootstrap).}
\label{tab:embedllm-routing-significance}
\end{table}

\subsection{Alternative algorithm}
\label{sec:supplemental-algorithm}
We build on prior work on non-obligatory inspection~\cite{beyhaghi2019pandora}, which differs slightly from \Cref{alg:pandora-ni}: instead of using $u^{\text{backup}}_m$ in lines 6, 8, and 9, they use $\mu_m.$ In low-cost settings, we have $u^{\text{backup}}_m < E[V_m] = \mu_m,$ which means that in these settings, our variant of the algorithm is less likely to commit to non-inspection and is therefore more aligned with Pandora-OI. In high-cost settings, $u^{\text{backup}}_m > E[V_m] = \mu_m,$ which means that in these settings, our variant is more likely to commit to non-inspection and is therefore more similar to $g$-always. Empirically, our variant is slightly superior to the original algorithm, as shown in \Cref{fig:supplemental-routing-bk19}.

\begin{figure}
\begin{tikzpicture}
\begin{groupplot}[
  group style={
    group size=3 by 1,
    horizontal sep=1cm,
    ylabels at=edge left,
    xlabels at=edge bottom,
  },
  xmode=log,
  xlabel={Cost},
  grid=major,
  width=.35\linewidth,
  height=.3\linewidth,
  label style={font=\scriptsize},
  tick label style={font=\scriptsize},
  title style={font=\scriptsize},
  legend style={font=\scriptsize},
  scaled y ticks=false,
  yticklabel style={/pgf/number format/fixed, /pgf/number format/precision=3},
  restrict y to domain*=0:1, 
]
\nextgroupplot[title={math}, ylabel={Regret + Inspection Cost}, legend to name=shared_legend, legend columns=5,ymax=.13,xmin=.003]
\addplot[
  color={rgb,255:red,44;green,160;blue,44},
  mark=o,
  mark size=1.5,
  thick,
] coordinates {
  (0.00147, 0.116539)
  (0.00215, 0.116539)
  (0.00316, 0.116539)
  (0.00464, 0.116539)
  (0.00681, 0.116539)
  (0.01, 0.116539)
  (0.01468, 0.116539)
  (0.02154, 0.116539)
  (0.03162, 0.116539)
  (0.04642, 0.116539)
  (0.06813, 0.116539)
};
\addlegendentry{$f$-only}
\addplot[
  color={rgb,255:red,214;green,39;blue,40},
  mark=o,
  mark size=1.5,
  thick,
] coordinates {
  (0.00147, 0.0929178)
  (0.00215, 0.0942778)
  (0.00316, 0.0962978)
  (0.00464, 0.0992578)
  (0.00681, 0.103598)
  (0.01, 0.109978)
  (0.01468, 0.119338)
  (0.02154, 0.133058)
  (0.03162, 0.153218)
  (0.04642, 0.182818)
  (0.06813, 0.226238)
};
\addlegendentry{$g$-always}
\addplot[
  color={rgb,255:red,31;green,119;blue,180},
  mark=o,
  mark size=1.5,
  thick,
] coordinates {
  (0.00147, 0.0959872)
  (0.00215, 0.0972924)
  (0.00316, 0.0998404)
  (0.00464, 0.102524)
  (0.00681, 0.103624)
  (0.01, 0.107521)
  (0.01468, 0.106259)
  (0.02154, 0.114382)
  (0.03162, 0.11522)
  (0.04642, 0.114402)
  (0.06813, 0.111548)
};
\addlegendentry{Pandora-NI (BK)}
\addplot[
  color={rgb,255:red,31;green,119;blue,180},
  mark=*,
  mark size=1.5,
  thick,
] coordinates {
  (0.00147, 0.095531)
  (0.00215, 0.0961895)
  (0.00316, 0.0986561)
  (0.00464, 0.0993538)
  (0.00681, 0.101194)
  (0.01, 0.106346)
  (0.01468, 0.112441)
  (0.02154, 0.110407)
  (0.03162, 0.111617)
  (0.04642, 0.113646)
  (0.06813, 0.115633)
};
\addlegendentry{Pandora NI}
\nextgroupplot[title={RAG},ymax=.18,xmin=.005]
\addplot[
  color={rgb,255:red,44;green,160;blue,44},
  mark=o,
  mark size=1.5,
  thick,
] coordinates {
  (0.00147, 0.150281)
  (0.00215, 0.150281)
  (0.00316, 0.150281)
  (0.00464, 0.150281)
  (0.00681, 0.150281)
  (0.01, 0.150281)
  (0.01468, 0.150281)
  (0.02154, 0.150281)
  (0.03162, 0.150281)
  (0.04642, 0.150281)
  (0.06813, 0.150281)
};
\addplot[
  color={rgb,255:red,214;green,39;blue,40},
  mark=o,
  mark size=1.5,
  thick,
] coordinates {
  (0.00147, 0.088035)
  (0.00215, 0.090075)
  (0.00316, 0.093105)
  (0.00464, 0.097545)
  (0.00681, 0.104055)
  (0.01, 0.113625)
  (0.01468, 0.127665)
  (0.02154, 0.148245)
  (0.03162, 0.178485)
  (0.04642, 0.222885)
  (0.06813, 0.288015)
};
\addplot[
  color={rgb,255:red,31;green,119;blue,180},
  mark=o,
  mark size=1.5,
  thick,
] coordinates {
  (0.00147, 0.0929207)
  (0.00215, 0.093175)
  (0.00316, 0.0980866)
  (0.00464, 0.100352)
  (0.00681, 0.104915)
  (0.01, 0.112509)
  (0.01468, 0.127695)
  (0.02154, 0.135712)
  (0.03162, 0.144994)
  (0.04642, 0.151505)
  (0.06813, 0.149037)
};
    \addplot[color={rgb,255:red,31;green,119;blue,180}, mark=*, mark size=1.5, thick] coordinates {
      (0.00464, 0.0981049) (0.00681, 0.102999) (0.01, 0.108834) (0.01468, 0.122738) (0.02154, 0.135758) (0.03162, 0.144131) (0.04642, 0.153414) (0.06813, 0.152547)
    };
\nextgroupplot[
title={embedllm},  
ymax=.45,
]
\addplot[
  color={rgb,255:red,44;green,160;blue,44},
  mark=o,
  mark size=1.5,
  thick,
] coordinates {
  (1e-05, 0.393312)
  (0.0001, 0.393312)
  (0.0003, 0.393312)
  (0.001, 0.393312)
  (0.003, 0.393312)
  (0.01, 0.393312)
  (0.03, 0.393312)
  (0.1, 0.393312)
  (0.3, 0.393312)
};
\addplot[
  color={rgb,255:red,214;green,39;blue,40},
  mark=o,
  mark size=1.5,
  thick,
] coordinates {
  (1e-05, 0.370964)
  (0.0001, 0.380864)
  (0.0003, 0.402864)
  (0.001, 0.479864)
  (0.003, 0.699864)
  (0.01, 1.46986)
  (0.03, 3.66986)
  (0.1, 11.3699)
  (0.3, 33.3699)
};
\addplot[
  color={rgb,255:red,31;green,119;blue,180},
  mark=o,
  mark size=1.5,
  thick,
] coordinates {
  (1e-05, 0.370227)
  (0.0001, 0.373319)
  (0.0003, 0.375406)
  (0.001, 0.382717)
  (0.003, 0.388889)
  (0.01, 0.397511)
  (0.03, 0.404045)
  (0.1, 0.39607)
  (0.3, 0.391553)
};
    \addplot[color={rgb,255:red,31;green,119;blue,180}, mark=*, mark size=1.5, thick] coordinates {
      (1e-05, 0.371504) (3.16228e-05, 0.374234) (0.0001, 0.371615) (0.000316228, 0.376055) (0.001, 0.382078) (0.00316228, 0.388652) (0.01, 0.397289) (0.0316228, 0.402962) (0.1, 0.39242)
    };
\end{groupplot}
\node[below=1cm] at ($(group c1r1.south)!0.5!(group c3r1.south)$) {\ref{shared_legend}};
\end{tikzpicture}
\caption{Evaluation of the \citet{beyhaghi2019pandora} algorithm, which uses $\mu_m$ instead of $u^{\text{backup}}_m$ for evaluating committing policies.}
\label{fig:supplemental-routing-bk19}
\end{figure}

\subsection{Non-Gaussian signal models}
\label{sec:non-gaussian-signal model}
Our implementation models $G_m \mid \mathbf{F} = \mathbf{f}$ as a conditional Gaussian, but real value distributions are bounded in $[0,1]$, so a Gaussian may not be an appropriate model. For this reason, we also experimented using KNN on the vector of $f$-scores to estimate local conditional mean and local standard deviation directly from the neighborhood of $f$. For $K = 16$ this indeed improved calibration metrics somewhat: empirical coverage of the $\pm 1 \sigma$ interval ranged between 58.7\% (RAG) – 78.1\% (MATH) for Gaussian, and 63.0\% (RAG) – 74.8\% (MATH) for KNN (theoretical target: $68.27\%$). For the $\pm 2 \sigma$ interval, the range was $89.8\%$ (RAG) – $94.5\%$ (EmbedLLM) for the Gaussian signal model, and $91.2\%$ (MATH) – $94.7\%$ (EmbedLLM) for KNN (theoretical target: $95.45\%$). 

We apply this signal model to the routing task, with results shown in \Cref{tab:knn-vs-gaussian}. Despite the better calibration of the KNN signal model, we did not see a significant improvement in regret + cost. For this reason, we focus on the Gaussian signal model in the main implementation.

\begin{table}[t]
\centering

\begin{tabular}{ll|cc|cc}
\toprule
\textbf{Setting} & \textbf{Method} & \multicolumn{2}{c|}{\textbf{Gaussian}} & \multicolumn{2}{c}{\textbf{KNN}} \\
& & \textbf{Regret + Cost} & \textbf{Queries} & \textbf{Regret + Cost} & \textbf{Queries} \\
\midrule
MATH & \text{Pandora's Router} & 0.1049 & 0.58 & 0.1090 & 0.78 \\
RAG & \text{Pandora's Router} & 0.1181 & 1.40 & 0.1218 & 1.22 \\
EmbedLLM & \text{Pandora's Router} & 0.3867 & 3.71 & 0.3913 & 0.40 \\
\bottomrule
\end{tabular}
\caption{Comparison between Gaussian and KNN approximations for Pandora's Router.}
\label{tab:knn-vs-gaussian}
\end{table}

\end{document}